\documentclass[letterpaper]{article} 
\usepackage{aaai23}  
\usepackage{times}  
\usepackage{helvet}  
\usepackage{courier}  
\usepackage[hyphens]{url}  
\usepackage{graphicx} 
\urlstyle{rm} 
\usepackage{natbib}  
\usepackage{caption} 
\frenchspacing  
\setlength{\pdfpagewidth}{8.5in}  
\setlength{\pdfpageheight}{11in}  
%
\usepackage{algorithm}
\usepackage{algorithmic}

%
\usepackage{newfloat}
\usepackage{listings}
\DeclareCaptionStyle{ruled}{labelfont=normalfont,labelsep=colon,strut=off} 
\lstset{%
	basicstyle={\footnotesize\ttfamily},
	numbers=left,numberstyle=\footnotesize,xleftmargin=2em,
	aboveskip=0pt,belowskip=0pt,%
	showstringspaces=false,tabsize=2,breaklines=true}
\floatstyle{ruled}
\newfloat{listing}{tb}{lst}{}
\floatname{listing}{Listing}
%
\pdfinfo{
/TemplateVersion (2023.1)
}

\usepackage[utf8]{inputenc} 
\usepackage[T1]{fontenc}    
\usepackage{url}            
\usepackage{booktabs}       
\usepackage{amsfonts}       
\usepackage{nicefrac}       
\usepackage{microtype}      
\usepackage{xcolor}         

\usepackage{algorithm}
\usepackage{algorithmic}
\usepackage{multirow}


\usepackage{amsmath}
\usepackage{mathtools}
\usepackage{amssymb}
\usepackage{comment}
\usepackage{enumitem}
\usepackage[subtle]{savetrees}

\DeclareMathOperator{\EX}{\mathbb{E}}

\newtheorem{theorem}{Theorem}
\newtheorem{lemma}{Lemma}

\newtheorem{definition}{Definition}

\DeclareMathOperator*{\argmax}{arg\,max}

\newenvironment{proof}{\noindent\emph{\textbf{Proof. }}\ignorespaces}%
{\hspace*{\fill}$\Box$\par}

\usepackage{graphicx}
\usepackage{subfigure}

%
\usepackage{newfloat}
\usepackage{listings}
\DeclareCaptionStyle{ruled}{labelfont=normalfont,labelsep=colon,strut=off} 
\lstset{%
	basicstyle={\footnotesize\ttfamily},
	numbers=left,numberstyle=\footnotesize,xleftmargin=2em,
	aboveskip=0pt,belowskip=0pt,%
	showstringspaces=false,tabsize=2,breaklines=true}
\floatstyle{ruled}
\newfloat{listing}{tb}{lst}{}
\floatname{listing}{Listing}
%
\pdfinfo{
/TemplateVersion (2023.1)
}

\setcounter{secnumdepth}{0} 

%


\title{Towards Efficient and Domain-Agnostic Evasion Attack \\ with High-dimensional Categorical Inputs}
\author{Hongyan Bao\textsuperscript{\rm 1},
    Yufei Han\textsuperscript{\rm 2},
    Yujun Zhou\textsuperscript{\rm 1},
    Xin Gao\textsuperscript{\rm 1},
    Xiangliang Zhang\textsuperscript{\rm 1,\rm 3,}\thanks{Corresponding author.} 
}
\affiliations{
    \textsuperscript{\rm 1}King Abdullah University of Science and Technology ~~
    \textsuperscript{\rm 2}INRIA ~~
    \textsuperscript{\rm 3}University of Notre Dame


%
}

\usepackage{bibentry}

\begin{document}

\maketitle
\begin{abstract}
Our work targets at searching feasible adversarial perturbation to attack a classifier with  high-dimensional categorical inputs in a domain-agnostic setting.
This is intrinsically a NP-hard knapsack problem where the exploration space becomes explosively larger as the feature dimension increases. Without the help of domain knowledge, solving this problem via heuristic method, such as Branch-and-Bound, suffers from exponential complexity, yet can bring arbitrarily bad attack results. We address the challenge via the lens of multi-armed bandit based combinatorial search. Our proposed method, namely FEAT, treats modifying each categorical feature as pulling an arm in multi-armed bandit programming. Our objective is to achieve highly efficient and effective attack using an Orthogonal Matching Pursuit (OMP)-enhanced Upper Confidence Bound (UCB) exploration strategy. Our theoretical analysis bounding the regret gap of FEAT guarantees its practical attack performance. In empirical analysis, we compare FEAT with other state-of-the-art domain-agnostic attack methods over various real-world categorical data sets of different applications. Substantial experimental observations confirm the expected efficiency and attack effectiveness of FEAT applied in different application scenarios. Our work further hints the applicability of FEAT for assessing the adversarial vulnerability of classification systems with high-dimensional categorical inputs. 
\end{abstract}

\section{Introduction}\label{sec:introduction}

Adversarial evasion attacks have been witnessed in many real-world data analytical applications\cite{goodfellow2014explaining,cartella2021adversarial,suciu2019exploring,stringhini2010detecting,imam2019survey}, including text processing \cite{yang2020greedy,papernot2016crafting} and image recognition {\cite{goodfellow2014explaining,szegedy2013intriguing}}. 
Despite the flourish efforts on evasion attacks with continuous inputs, such as image and video contents \cite{goodfellow2014explaining,szegedy2013intriguing,CarliniSP2018,biggio2012poisoning}, 
much less attention has been paid to explore the adversarial threat against on machine learning systems with categorical inputs. Categorical data exist prevalently in real-world trust-critical applications, like cyber attack detection \cite{shu2020fakenewsnet,wang2017liar,deepcase} and medical diagnosis. For instance, detecting cyber attacks usually depends on categorical behavioral signatures of the target IT infrastructures, including malware execution traces, malicious network communication logs, and system event logs \cite{deepcase,pendlebury2019tesseract}. Machine Learning-based medical diagnosis is often conducted by combining and encoding qualitative results of various medical tests. 
Unlike continuous measurements such as pixel intensities, each categorical feature is valued with mutually exclusively category values. These optional category values have no intrinsic ordering. Conducting adversarial perturbations on categorical features is therefore in nature an \textit{NP-hard knapsack problem} \cite{wang2020attackability}. 
\textbf{On one hand}, popular gradient-guided evasion attack methods against deep learning models \cite{Goodfellow2015} become infeasible as computing gradients directly over categorical variables is not applicable. \textbf{On the other hand}, classic heuristic search solutions, e.g., Branch-and-Bound and trial-and-error methods, suffer from high complexity and lack guaranteed quality of the derived attack results, which can lead to arbitrarily bad attack performances.
It is therefore difficult to define a computationally efficient strategy to produce effective adversarial perturbations over categorical inputs.

The current study in solving the adversarial attack problem over categorical inputs falls into two groups.
\textbf{First}, \textbf{domain-specific} knowledge is applied to narrow down the combinatorial perturbation space, and used as constraints to preserve semantic/function integrity of the perturbed instances \cite{li2020bert-attack, zang2020word,gao2018black,li2018textbugger,jin2020bert,samanta2017towards,papernot2016crafting,ma2018risk,wang2020attackability,suciu2019exploring,Narodytska2017cvprw,Croce2019iccv,fabio2020sp}. 
Such domain-specific dependency limits the adaptive potential of the attack method across different applications.     
Moreover, domain-specific knowledge may not be always readily available. For example, the threat settings of cyber attacks vary drastically across different attack techniques and IT system architectures \cite{deepcase}. Encoding domain-specific contexts of various intrusion incidents require expensive investigation overheads on a case-by-case basis. Besides, 
system threats may stay unknown to security analysts when an attack is delivered. It is impossible to define domain-specific rules for the zero-day attack events. 
The absence of \emph{a principled and domain-agnostic adversarial attack protocol} makes it difficult to provide an attack-as-a-service pipeline to evaluate the adversarial vulnerability of different trust-critical applications. \textbf{Second}, forward stepwise greedy search (FSGS) has been adopted in \cite{ebrahimi2018hotflip,wang2020attackability} as a domain-agnostic method to generate feasible adversarial modifications to categorical data. Domain-specific constraints over the feasible adversarial modifications can be used as a plug-in to FSGS. However, the greedy search method induces prohibitively expensive computational cost as the number of categorical features and/or the optional category values in the target input become large. The intense overheads prevent the adversary from organizing efficient attacks and/or makes it inapplicable to assess the adversarial vulnerability of a target machine learning system in practices. 

To address the limits of current study, we propose an orthogonal matching pursuit (OMP)-boosted multi-armed bandit search to deliver a \textbf{f}ast and \textbf{e}ffective \textbf{a}dversarial a\textbf{t}tack in a high-dimensional combinatorial search space, named as FEAT hereafter. FEAT adopts orthogonal matching pursuit \cite{elenberg2018restricted,wang2020attackability} to identify the most sensitive categorical features to perturb in each round of the iterative attack process. Over the selected candidate features, FEAT considers modifying each categorical feature as triggering an arm in a multi-armed bandit game. Exploring the feasible combinations of categorical feature modifications can thus be guided with Upper Confidence Bound (UCB)-driven exploration in a computationally efficient way. 
The advantages of FEAT are summarized as follows. 

\begin{itemize}[leftmargin=*]
    \item \textbf{Computationally-economic attack with high-dimensional categorical inputs.} The computational complexity of FEAT is linear to the number of modified features. In contrast, the complexity of the state-of-the-art domain-agnostic attack methods proposed by \cite{QiSysML2018} and \cite{wang2020attackability} grow as a geometric series of the number of modified features. Empirically FEAT costs {1/10} -- {1/3} of the overheads compared to the state-of-the-art domain-agnostic and domain-specific attack methods, while requiring less features to modify to deliver highly successful attacks.  
    \item \textbf{Theoretical guaranteed attack performance.} We set up an upper bound of the expected regret of FEAT in our analysis. It applies to a general Lipschitz-smooth deep learning-based target classifier with categorical inputs, which guarantees the attack performance of FEAT in general attack scenarios. 
    \item \textbf{Domain-agnostic adaption to various different applications.} We evaluate FEAT over 4 categorical data sets collected from various real-world applications. The empirical observations confirm FEAT is well adapted to different application domains and show its consistently superior attack effectiveness and efficiency, comparing to the state-of-the-art domain-agnostic and domain specific attack baseline methods. The results also reconcile with the theoretical guarantee to the success of FEAT attacking general classifiers. 
\end{itemize}

\section{Related Work}
 \cite{wang2020attackability,QiSysML2018,yang2020greedy,ebrahimi2018hotflip} proposed to adopt forward stepwise greedy search (\textit{FSGS}) based methods in generating discrete adversarial samples in a domain-agnostic way. \textit{FSGS} is an iterative process. In each iteration, it considers all possible combination of each candidate categorical feature with the subsets of the adversarially modified features in previous rounds. \textit{FSGS} then chooses the candidate feature that can achieve the largest marginal gain of the attack objective. Though \textit{FSGS} plays as a domain-agnostic attack method, it can also use additional domain-specific constraints to reduce the size of feasible feature modifications. However, the bottleneck of \textit{FSGS} is that its computational cost grows as a geometric series of the number of the modified features. It becomes prohibitively expensive as the dimension of the target discrete instance is high. 

Domain-specific adversarial attack mostly target at text classifiers \cite{papernot2016crafting,Miyato2016AdversarialTM,samanta2017towards,Yang2018GreedyAA,gao2018black,li2018textbugger,jia2017adversarial,jin2020bert}.
\cite{gao2018black} developed scoring functions to evaluate the importance of each word in a sentence and proposed to modify the top-ranked words identified by the scoring functions. Similarly, \cite{papernot2016crafting} selected the word to replace where the variation of the word's embedding vector is best aligned to the gradient direction of the target model. In contrast, \cite{jia2017adversarial} proposed to insert distraction sentences into a target text sample with a human-involved loop to fool a reading comprehension system. \cite{samanta2017towards} added linguistic constraints over the pool of candidate-replacing words. Recently, \textit{TextBugger} \cite{li2018textbugger} used typo-based perturbation for each word to get the candidates of feasible modifications over each word. \textit{TextFooler} \cite{jin2020bert} used the similarity of word embedding to select the candidates of each words to attack. These methods depend on semantic/syntactic rules to shrink the feasible set of text modifications. Besides, they adopt trial-and-error search to explore possible text modifications. They lack the guarantee to the solution quality to the knapsack based discrete evasion attack problem. Their attack performances, i.e., the success of attack complying to the attack budget constraint, may thus vary drastically over different target inputs. 

\section{Preliminaries}
Let $\mathbf{x} = \{x_1,x_2,x_3,...,x_N\}$ denote a discrete input instance with $N$ categorical features. 
Each $x_i$ may take any of $M$ ($M\geq{1}$) categorical values. 
We cast each optional category value of a discrete feature $x_{i}$ to a $D$-dimensional pre-trained embedding vector, e.g., $\mathbf{e}^{j}_{i} \in{R^{D}},\, j=1,2,...,M$. We introduce binary indicators $\mathbf{b} = \{b^{j}_{i}\}$, $i$=$1,2,...,N$, $j$=$1,2,...,M$,
where $b^{j}_{i} = 1$ when the $i$-th categorical feature $x_{i}$ takes the $j$-th categorical value of $x_i$,  
and $b^{j}_{i} = 0$ otherwise. 
One instance $\mathbf{x}$ can then be represented by stacking the embedding vectors of each categorical variable $x_{i}$ as an $R^{N*M*D}$ tensor with $\mathbf{x}_{\{i,j,:\}} = b^{j}_i{\mathbf{e}^{j}_i}$. We summarize all the notations used for describing the design of the proposed method in Appendix.A. 

With this setting, the adversarial perturbation over $\mathbf{x}$ is to modify $\mathbf{{b}}$  to $\mathbf{\hat{b}}$. 
$\hat{b}^{j}_{i}={b}^{j}_{i}$ denotes $x_{i}$ is not perturbed. Otherwise, $\hat{b}^{j}_{i}\neq{b}^{j}_{i}$ indicates the corresponding feature $x_{i}$ is changed. Depending on the type of attacks, i.e., \emph{insertion}, \emph{deletion} or \emph{substitution}, $\hat{b}^{j}_{i}$ can be valued in different ways. \emph{Insertion} is to let $\hat{b}^{j}_{i}=1$, given $b^{j}_{i}=0,\,\forall j=1,...,m$.
\emph{Deletion} is to let $\hat{b}^{j}_{i}=0$, given $b^{j}_{i}=1$. 
\emph{Substitution} is to let $\hat{b}^{j}_{i}=1,  \hat{b}^{j'}_{i}=0$, given $b^{j}_{i}=0$, $b^{j'}_{i}=1, j\neq j'$.
A modified instance $\hat{\mathbf{x}}$ can thus be written as $\mathbf{\hat{x}}_{\{i,j,:\}} = \hat{b}^{j}_i{\textbf{e}^{j}_i}$. The classifier $f$ outputs decision confidence $f_{y_k}$ ($k=1,2,3,...,{K}$) with respect to different class labels. Without loss of generality, let ${y}_K$ denote the true class label of ${x}$ and all the other ${y}_{k}$ ($k=\{1,...,{K}-1\}$) are the potential targets of an evasion attack. Given an input $\mathbf{x}$, the goal of evasion attack is to increase the misclassification risk of $f$ over $\mathbf{x}$, i.e., making $f_{y_K}(\mathbf{x},{\hat{\mathbf{b}}})$ as low as possible and $f_{y_k}(\mathbf{x},{\hat{\mathbf{b}}})$ of any of the $k$ except ${K}$ (\emph{non-targeted attack}) as high as possible simultaneously. The combinatorial optimization problem of evasion attack is defined below:
\begin{definition}
$f:\mathbf{x}\rightarrow{y}$ denotes a classifier with categorical inputs $\mathbf{x}$. The adversary aims to maximize the misclassification confidence $f$ complying the constraint of the attack budget $\varepsilon$, i.e. the maximum number of modified categorical variables in $\mathbf{x}$.
\begin{equation}\label{eq:adv_risk}
\footnotesize
\hat{\mathbf{b}}^* = \underset{\hat{\mathbf{b}},|\text{diff}(\mathbf{b},\hat{\mathbf{b}})|\leq{\varepsilon}}\argmax f_{y_{k}}(\hat{\mathbf{x}}_{\{i,j,:\}}=\hat{b}^{j}_{i}\mathbf{e}^{j}_{i}),\quad  y_{k}\neq{y_{K}}
\end{equation}
where $\mathbf{x}_{\{i,j,:\}} = b^{j}_{i}\mathbf{e}^{j}_{i}$ and  $\mathbf{\hat{x}}_{\{i,j,:\}} = \hat{b}^{j}_{i}\mathbf{e}^{j}_{i}$ are the unperturbed  and the adversarially tuned instance. 
\end{definition}

\section{The Algorithm Design of FEAT}
Our design of FEAT illustrated in \textbf{Algorithm} \ref{alg:UCBVRG} is inspired by the analogy between Multi-Armed Bandit (MAB)-based combinatorial search and the attack problem given in Definition.1. Finding \emph{one categorical feature $x_{i}$ in the input instance $\mathbf{x}$ to perturb} is analogous to selecting \emph{one arm to pull in an MAB game}. Each arm is characterized by the distribution of the received rewards. Similarly, taking an action to modify the category value of one discrete feature can also cause the variation of the decision output of the target classifier $f$ as a feedback. 

More specifically, FEAT defines an iterative MAB search in the discrete feature space to solve the knapsack optimization problem in Eq.\ref{eq:adv_risk} (see \textbf{Algorithm} \ref{alg:UCBVRG} \textbf{Line 6-15}). Given a categorical feature $x_{l}$ of $\mathbf{x}$, $t_l$ denotes the number of times when $x_{l}$ is selected to perturb after $t$ iterations of the MAB-driven search.{Inheriting the terms used in Eq.\ref{eq:adv_risk}, the reward of modifying each $x_l$ in current $t^{c}$ iteration (noted as $G_{l,t^{c}}$ in Eq\eqref{eq:reward}) is defined as the maximum gap $m_{f}$ between the classifier's output over any wrong label $k$ and the correct label $K$ by modifying $x_{l}$. 
\begin{equation}\label{eq:reward}
\small
    G_{l,t^c} = \max f_{y_{k}}(\hat{\mathbf{x}}_{l,t^{c}}) - f_{y_{K}}({\mathbf{x}}) + \Lambda 
\end{equation}
where $\hat{\mathbf{x}}_{l,t^{c}}$ denotes the adversarially perturbed input instance at the current iteration $t^{c}$ with $x_l$ changed. A constant $\Lambda$ is added to Eq.\ref{eq:reward} to ensure the non-negativeness of the received rewards. In practices, we set $\Lambda=1$.}
In each iteration, the Upper Confidence Bound (UCB) score of each candidate discrete feature can be computed following Eq.\eqref{eq:UCBv}: 
\begin{equation}\label{eq:UCBv}
\small                  
B_{l,t_l,t} = {\Bar{\mu}}_{l,t} + \sqrt{\frac{ \alpha {\Bar{\delta}^2}_{l,t_l}*\log{t} }{t_l}} + \frac{\log{t}}{t_l}
\end{equation}
where ${\Bar{\mu}}_{l,t} \overset{\text{def}}{=} \frac{1}{t}\sum_{t^c=1}^t G_{l,t^c}$ and ${\Bar{\delta}}^2_{l,t} \overset{\text{def}}{=} \frac{1}{t} \sum_{t^c=1}^t (G_{l,t^c}- {\Bar{\mu}}_{l,t})^2$ are the empirical mean and variance of the obtained rewards (increase of the classification confidence produced by $f$) by changing $x_l$ after $t$ iterations of search. \emph{In each iteration, the adversary chooses the candidate feature with the highest UCB score as the target to perturb.} The parameter $\alpha$ is  tunable to make a trade-off between exploration and exploitation of the search for discrete feature perturbations. \textbf{On the one hand}, a larger $\alpha$ extends the exploration covering more new candidate features that have never been tried before. \textbf{On the other hand}, an extremely small $\alpha$ drives the search to lean more towards the highly sensitive features. Modifying any of these features can cause drastic variation of $f$'s decision. 
We traverse different choices of $\alpha$ in FEAT to empirically observe the impact of $\alpha$ over the attack performance.{We leave the parameter sensitivity study in Appendix.E}. According to Theorem.2 in \cite{Audibert2007}, choosing the UCB score as in Eq.\ref{eq:UCBv} ensures that the event of drawing sub-optimal candidate features in the attack process has a decreasingly smaller probability after increasingly more iterations of search. The adversary then conducts the UCB-guided exploration of feasible discrete perturbations to avoid exhaustive search over all the possible combinations of the candidate categorical features.

\begin{algorithm}[ht]
\small
\caption{FEAT: Fast and Effective Adversarial aTtack}\label{alg:UCBVRG} 
\begin{minipage}{0.6\textheight}
\begin{algorithmic}[1]
\small
\REQUIRE
The input $\mathbf{x}$ to perturb, the trained model $f_{y}$, \\
the attack budget $\varepsilon$, the time limit $T_L$, \\ the number of features to select $L$, \\
the number of UCB loops  $\tau$;\,\, 
\ENSURE
the chain of features selected to attack   $S$;\\
\STATE $S_{0}\leftarrow{\emptyset}$,  $\hat{\mathbf{x}} \leftarrow \mathbf{x}$\\
\WHILE{$len(S_t) {\leq} {\varepsilon}$ and \emph{TimeCost}  {$\textless$} ${T_L}$}{
     \STATE $\text{grad}_{i}{\leftarrow}$ The gradient ${\nabla}{f_{y}(\hat{x}_i)}$ for each feature $\hat{x}_i$\\
      \STATE Weight of each feature ${{w_i}} = \text{grad}_{i}/{\sum_{j=1}^N}\text{grad}_{j}$ \\
      \STATE Select top-$L$ features based on ${w_i}$ from $N$ features\\
      \FOR{$l=1,2,\ldots$,$L$,}{
           \STATE $ {\hat{x}_l} \leftarrow $ the $l$-th selected feature \\
           \STATE $G_{l,t^0} =  \max f_{y_{k}}(\hat{\mathbf{x}}_{l,t^0}) - f_{y_{K}}(\mathbf{x}) + \Lambda$ \\
           
          }
          
          \ENDFOR \\
          \FOR{  {$t=1,2,\ldots, \tau$}}{
                 \STATE Update ${\Bar{\mu}}_{l,t-1}$  and ${\Bar{\delta}}^2_{l,t-1}$ in Eq.\eqref{eq:UCBv}
                 \STATE $I_t = {\underset{l\in{\{1,2...,L\}}}{\arg\max}{ B_{l,(t-1)_l,t-1}}} $ \\
                 \STATE ${S_{t}~~{\leftarrow}~~ S_{t-1}~{\cup}~{I_t}}$  
                 \\
                 \STATE Modify  {$I_t$} in $\hat{\mathbf{x}}$    \\
              }
              \ENDFOR
      }
      \ENDWHILE

\end{algorithmic}
\end{minipage}
\end{algorithm}


{The popular heuristic search methods, e.g., the standard UCB and Thompson Sampling (TS), share similar computational complexity according to \cite{agrawal13aistats,auer2002finite}. 
In the high-dimensional feature space, applying these methods directly is prone to fast increasing of the computational cost as the number of the categorical features (noted as $N$) and/or the number of the optional categorical values per feature (noted as $M$) become higher \cite{agrawal13aistats,auer2002finite}. Besides, the regret of TS does not scale polynomially in the feature dimension. TS can perform strictly worse than random choice in the high dimensional case \cite{Zhang2021nips}. The complexity bottleneck of both methods motivates us to adopt a more efficient strategy adapted to the high-dimensional search problem.} Previously, \cite{Wang2008nips} chooses to randomly sample a subset of features to perform the UCB subroutine to reduce the cost. However, blindly sampling subsets of features may miss effective feature perturbations that bring large variation of the classifier's output, eventually hurting the attack performance, as shown in \cite{Wang2008nips}.  

In the proposed FEAT method, we aim to optimise the balance between the exploration coverage and the computational overheads by boosting the UCB-guided search with an orthogonal matching pursuit (OMP)-based feature ranking strategy \cite{buchbinder2014submodular,wang2020attackability}.  Each iteration of the search is composed by first conducting the OMP computing to rank the candidate categorical features according to their influence over the classifier's decision given the current input instance (see \textbf{Algorithm} \ref{alg:UCBVRG} \textbf{Line 3-5}) and then performing $\tau$ rounds of UCB search as the inner iterations over the top $L$ influential features selected by the OMP computation (see \textbf{Algorithm} \ref{alg:UCBVRG} \textbf{Line 6-15}). $\tau$ is a tunable parameter, adjusting the number of search rounds within the selected top $L$ features. 
We choose $\tau$ empirically to 1/3 of the attack budget, which presents consistently good attack success rate with low attack budget cost. 



To perform the OMP-based feature ranking, we relax the binary indicators $b^{j}_{i}$ attached to each categorical feature $x_{i}$ to be continuous and valued within the range $[0,1]$. We then take the gradient of $f(\mathbf{x},\mathbf{b})$ with respect to $\mathbf{b}$, denoted as $\nabla_{\mathbf{b}}(f_{y}(\mathbf{b}))$. The selected candidate categorical features are those with the largest gradient magnitudes $\|\nabla_{\mathbf{b}}(f_{y}(\mathbf{b}))\|$. We reason the rationality of using the OMP-boosted UCB search by establishing the theoretical study first in Theorem.\ref{theorem:bound_gradient}. The theoretical study shows the OMP-based ranking can select highly influential features over the classifier's decision. Furthermore, we build the regret bound of the OMP-boosted UCB search of FEAT in Theorem.\ref{theorem:regert bound}, which further explains the merit of the OMP-ranking in enhancing the efficiency of UCB search. The regret analysis in Theorem.\ref{theorem:regert bound} also provides a theoretical guarantee to the attack performance of FEAT against a general classifier with categorical inputs. We state the two theorems in the followings. Via the theoretical study, we discuss further how the the designed balance between exploration and exploitation is achieved in FEAT.  We provide the corresponding proofs to the two theorems  in Appendix.B and C. 

The reward distribution may drift during the attack process, which poses a challenge of non-stationary rewards to the practices of FEAT. We adapt the UCB-based search to the scenario from two aspects. \textbf{On one hand}, The OMP operation of FEAT restricts the search range to the potentially sensitive features. The rewards of those sensitive features remain relatively more stable than the rest features within a few iterations of search, as shown by Observation.2 in Appendix.F. \textbf{On the other hand}, FEAT re-initialise the UCB-based search every $\tau$ rounds and recompute the OMP-based feature ranking. 
Via this way, the UCB-based search is conducted only within the consecutive $\tau$ inner iterations. Based on the empirical observations in Appendix.F, the reward distribution of the selected $L$ sensitive features can be considered approximately stationary within the $\tau$ inner iterations. Though lack of proof, FEAT provides an empirically feasible environment for the use of the UCB-based search.




\subsection{The Indicator of Feature Importance}

\begin{theorem}\label{theorem:bound_gradient}
\textbf{Gradient as an Indicator.} Let $\mathbf{b}$ indicate the category value assignment of an unperturbed data instance ${\mathbf{x}}$. $\hat{\mathbf{b}}$ and $\hat{\mathbf{b}}'$ indicate the two different sets of the modification over the same unperturbed input $\mathbf{x}$. We further assume $|\text{diff}\,(\mathbf{b},\hat{\mathbf{b}})|\leq{\zeta},|\text{diff}\,(\mathbf{b},\hat{\mathbf{b}}')|\leq{\zeta},|\text{diff}\,(\hat{\mathbf{b}},\hat{\mathbf{b}}')|\leq{\zeta},\,\zeta\geq{1}$. Given a smooth target classifier $f$ with a finite Lipschitz constant, $f_{y_k}(\mathbf{x})$ denotes the decision output of $f$ over any incorrect class label, i.e. $y_k\neq{y_{K}}$ in Definition.1. Let {\footnotesize $\nabla{f}_{y}(\mathbf{x},\hat{\mathbf{b}})_{\nu}$} denote the elements of {\footnotesize $\nabla{f}_{y}(\mathbf{x},\hat{\mathbf{b}})$} corresponding to the difference between $\hat{\mathbf{b}}$ and $\hat{\mathbf{b}}'$, where {\footnotesize $\nu = \textit{diff}(\hat{\mathbf{b}},\hat{\mathbf{b}}')$. }
\begin{equation}\label{eq:gradientbound}
\small
\begin{split}
    |f_{y_k}({\mathbf{x}},\hat{\mathbf{b}'}) - f_{y_k}({\mathbf{x}},\hat{\mathbf{b}})|\leq &\quad \max\{\frac{1}{2m_{k,\zeta}}\|\nabla{f_{y_k}}(\mathbf{x},\hat{\mathbf{b}})_{\nu}\|^2_2,  
   \;\; \\
   &\quad \|\nabla{f_{y_k}}(\mathbf{x},{\hat{\mathbf{b}}})_{\nu}\|_2+M_{k,\Omega_{\zeta}}|\zeta|/2\}\\
\end{split}
\end{equation}
where 
$m_{k,\zeta}$ and $M_{k,\Omega_{\zeta}}$ are the local strong convexity factor and local Lipschitz constant of the target classifier $f$ around the unperturbed input $\mathbf{x}$. 
\end{theorem}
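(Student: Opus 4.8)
The statement is a standard second-order/Taylor-type bound adapted to the relaxed categorical setting, so the plan is to reduce it to properties of the restricted function $g(\mathbf{b}) := f_{y_k}(\mathbf{x},\mathbf{b})$ viewed as a function of the coordinates of $\mathbf{b}$ that can be flipped. First I would fix the two perturbations $\hat{\mathbf{b}}$ and $\hat{\mathbf{b}}'$ and let $\nu = \textit{diff}(\hat{\mathbf{b}},\hat{\mathbf{b}}')$; only the $\nu$-coordinates change, and $|\nu|\le\zeta$. I would then write the difference $g(\hat{\mathbf{b}}') - g(\hat{\mathbf{b}})$ via the fundamental theorem of calculus along the segment from $\hat{\mathbf{b}}$ to $\hat{\mathbf{b}}'$ (the relaxation to $[0,1]$ is precisely what makes this segment admissible), which gives $g(\hat{\mathbf{b}}') - g(\hat{\mathbf{b}}) = \langle \nabla g(\hat{\mathbf{b}})_\nu, (\hat{\mathbf{b}}'-\hat{\mathbf{b}})_\nu\rangle + \tfrac12\langle (\hat{\mathbf{b}}'-\hat{\mathbf{b}})_\nu, H(\hat{\mathbf{b}}'-\hat{\mathbf{b}})_\nu\rangle$ for some Hessian-type term $H$ evaluated in a neighborhood of $\mathbf{x}$ of radius controlled by $\zeta$.

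Second, I would bound the two terms using the two hypotheses on $f$. The quadratic remainder is controlled by the local Lipschitz constant of the gradient, $M_{k,\Omega_\zeta}$, giving a term of order $M_{k,\Omega_\zeta}\|(\hat{\mathbf{b}}'-\hat{\mathbf{b}})_\nu\|^2$; since the flips are $\{0,1\}$-valued on at most $\zeta$ coordinates, $\|(\hat{\mathbf{b}}'-\hat{\mathbf{b}})_\nu\|$ is bounded in terms of $\zeta$, which yields the $M_{k,\Omega_\zeta}|\zeta|/2$ contribution, and the linear term is bounded by $\|\nabla g(\hat{\mathbf{b}})_\nu\|_2$ times the same norm. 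This produces the second branch of the max, $\|\nabla f_{y_k}(\mathbf{x},\hat{\mathbf{b}})_\nu\|_2 + M_{k,\Omega_\zeta}|\zeta|/2$. The first branch, $\tfrac{1}{2m_{k,\zeta}}\|\nabla f_{y_k}(\mathbf{x},\hat{\mathbf{b}})_\nu\|_2^2$, comes from the complementary regime: when $f$ is locally strongly convex with factor $m_{k,\zeta}$ around $\mathbf{x}$, one can invoke the standard inequality relating the suboptimality of a point to the squared gradient norm (the Polyak--Łojasiewicz / quadratic-growth consequence of strong convexity), applied to the restricted minimization over the $\nu$-coordinates; this bounds $|g(\hat{\mathbf{b}}')-g(\hat{\mathbf{b}})|$ by $\tfrac{1}{2m_{k,\zeta}}\|\nabla g\|_2^2$. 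Taking the max over the two derivations gives a bound valid in both regimes simultaneously.

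Third, I would reconcile the bookkeeping: the hypotheses also assume $\hat{\mathbf{b}}$ and $\hat{\mathbf{b}}'$ are each within $\zeta$ of the unperturbed $\mathbf{b}$, which is what guarantees the entire segment stays inside the neighborhood $\Omega_\zeta$ on which the local constants $m_{k,\zeta}$ and $M_{k,\Omega_\zeta}$ are defined, so the local (rather than global) Lipschitz/convexity hypotheses are legitimately applicable. The constant $M\geq 1$ and the dimension $D$ enter only through the embedding step: flipping one categorical coordinate changes the stacked tensor by a bounded amount depending on the embedding vectors, so the $[0,1]$-relaxed gradient coordinates and the true combinatorial differences are comparable, which is why the argument phrased in terms of $\mathbf{b}$ transfers back to $\mathbf{x}$.

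\textbf{Main obstacle.} The delicate point is not the Taylor expansion itself but justifying the \emph{two-regime} structure of the bound cleanly — in particular, making precise in what sense ``local strong convexity around $\mathbf{x}$'' is being invoked when neither $\hat{\mathbf{b}}$ nor $\hat{\mathbf{b}}'$ is a minimizer, and why the max of the convex-regime bound and the Lipschitz-regime bound is the right envelope rather than, say, their sum. I expect this requires carefully choosing which of $\hat{\mathbf{b}},\hat{\mathbf{b}}'$ plays the role of the ``base point'' in the strong-convexity inequality and possibly splitting on whether the gradient norm is large or small relative to $m_{k,\zeta}|\zeta|$; the rest of the argument is routine second-order calculus on the relaxed domain.
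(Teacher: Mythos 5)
Your plan matches the paper's proof in Appendix~B essentially step for step: a first-order expansion whose remainder is sandwiched between $\tfrac{m}{2}\|\Delta\|_2^2$ and $\tfrac{M}{2}\|\Delta\|_2^2$ (the paper's $(m_{\Omega},M_{\Omega})$-smoothness condition), Cauchy--Schwarz plus $\|\Delta\|_2^2\le|\zeta|$ for the $\|\nabla{f_{y_k}}\|_2+M_{k,\Omega_{\zeta}}|\zeta|/2$ branch, and maximization of the concave quadratic $\langle\nabla f,\Delta\rangle-\tfrac{m}{2}\|\Delta\|_2^2$ (your PL/quadratic-growth step in different clothing) for the $\tfrac{1}{2m_{k,\zeta}}\|\nabla f\|_2^2$ branch. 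The one obstacle you flag is resolved in the paper not by comparing the gradient norm to $m_{k,\zeta}|\zeta|$ or by choosing a base point, but by splitting on the sign of the first-order Taylor remainder $\epsilon_k$: when $\epsilon_k\ge 0$ the $+M$ estimate gives the second branch, when $\epsilon_k<0$ the $-m$ estimate gives the first, and since exactly one regime holds at a time the max (rather than the sum) of the two branches is a valid envelope.
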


As corroborated by Theorem \ref{theorem:bound_gradient}, the magnitude of each element in $\nabla_{\mathbf{b}}(f_{y}(\mathbf{b}))$ provides a bounded estimator to the marginal contribution of attacking the corresponding categorical feature. The top-ranked candidate features with the highest gradient magnitudes $\|\nabla_{\mathbf{b}}(f_{y}(\mathbf{b}))\|$ are more likely to be the most sensitive features with respect to the adversarial perturbation, compared to those at the tail of the ranking list. We therefore use the OMP-based ranking strategy to narrow down the search regime within the selected top-ranked and potentially sensitive features. Perturbing more sensitive features can produce higher adversarial risk over categorical inputs, according to Theorem.2 in \cite{han2021iclr}. The rationality of integrating this feature ranking step is thus to encourage the UCB-guided search to concentrate more on manipulating the sensitive features, which is more likely to cause larger change to the classifier's output with only a few feature changed. 







\subsection{The Expected Regret Bound of FEAT}

\begin{definition}
For one feature $l$ out of the top-ranked $L$ features, 
the expected regret of perturbing   $l$ is
\begin{equation}\label{eq:regret def}
\bigtriangleup_l  =\mu^* - \mu_{l} \quad (\mu^* = \max_{1\leq l \leq L}{\mu_{l}})
\end{equation}
where $\mu_l$ and $\mu^*$ are the expected and optimal reward received by changing $l$. 
\end{definition}

\begin{theorem}
\label{theorem:regert bound}
\textbf{Perturbing highly sensitive features helps shrink the regret bound of FEAT.} Let $\bigtriangleup_l >0$ and ${\delta^2_{l}}>0$ be the expected regret and the expected variance of the rewards received by modifying each of 
the top-$L$ candidate features. 
The expected regret bound of FEAT after $T$ iterations can be given as:
\begin{equation}\label{eq:regretbound} 
    \EX[{R}_T] \; \leq  \; \sum_{l=1}^L [8(\frac{\delta^2_{l}}{\bigtriangleup_l} + 2)\log T + \frac{\alpha}{\alpha-2} \bigtriangleup_l], 
\end{equation}    
where $\EX[{R}_T] \overset{\text{def}}{=} \sum_{l=1}^L \EX[T_l]  \bigtriangleup_l$. ${T_l}$ is the number of times that feature $l$ is selected after $T$ iterations. 

    
\end{theorem}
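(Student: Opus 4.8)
\textbf{Proof proposal for Theorem \ref{theorem:regert bound}.}

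The plan is to recognize that, after the OMP step fixes the top-$L$ candidate features, the inner loop of FEAT is exactly an instance of the UCB-V (variance-aware UCB) algorithm of \cite{Audibert2007} played on $L$ arms, where arm $l$ has reward distribution with mean $\mu_l$ and variance $\delta_l^2$ and the exploration bonus in Eq.\eqref{eq:UCBv} matches their bound $B_{l,t_l,t} = \bar\mu_{l,t} + \sqrt{\alpha \bar\delta^2_{l,t_l}\log t / t_l} + \log t / t_l$. So the whole theorem reduces to invoking the regret bound for UCB-V and then summing over the gaps. First I would set up the identity $\EX[R_T] = \sum_{l=1}^L \EX[T_l]\,\bigtriangleup_l$, which is just the definition of cumulative pseudo-regret, so the task is to control $\EX[T_l]$ for each suboptimal arm $l$ (the optimal arm contributes $\bigtriangleup_l = 0$, and is harmless).

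Next I would bound $\EX[T_l]$. The standard argument: feature $l$ is selected at round $t$ only if $B_{l,t_l,t} \ge B_{l^*,t_{l^*},t}$, which forces at least one of three events — the empirical mean of $l^*$ underestimates $\mu^*$ by more than its confidence radius; the empirical mean of $l$ overestimates $\mu_l$ by more than its confidence radius; or $t_l$ is already so large that the confidence radius is smaller than $\bigtriangleup_l/2$. The first two events have probability controlled by a Bernstein-type concentration inequality (this is where the variance proxy $\delta_l^2$ enters, and where the union bound over $t$ contributes the $\log T$ factor and, with the $\alpha/(\alpha-2)$ constant, requires $\alpha > 2$ — note the theorem statement's $\alpha/(\alpha-2)$ term encodes precisely this tail-sum $\sum_t t^{-\alpha/2} \le$ const$/(\alpha-2)$). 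The third event gives the leading term: solving $\sqrt{\alpha \delta_l^2 \log t / u} + \log t / u \le \bigtriangleup_l/2$ for $u$ yields the threshold $u = O\big((\delta_l^2/\bigtriangleup_l^2 + 1/\bigtriangleup_l)\log T\big)$, and multiplying by $\bigtriangleup_l$ gives the $8(\delta_l^2/\bigtriangleup_l + 2)\log T$ term after tracking the constants as in \cite{Audibert2007}. Summing over $l = 1,\dots,L$ produces Eq.\eqref{eq:regretbound}.

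The main obstacle — and the only place real care is needed — is the bookkeeping of constants so that the threshold on $t_l$ comes out as exactly $8(\delta_l^2/\bigtriangleup_l + 2)\log T$ rather than some looser expression; this means being precise about how the two parts of the confidence radius (the square-root Bernstein term and the additive $\log t / t_l$ term) are each allotted half of $\bigtriangleup_l/2$, and then inverting the resulting inequality in $t_l$. A secondary point is making sure the concentration inequality is applied to $\bar\mu_{l,t}$ defined with the global counter $t$ in the denominator (as in Eq.\eqref{eq:UCBv}) consistently with how \cite{Audibert2007} states it; if the definitions differ one must either re-derive the bound or argue the discrepancy is absorbed into constants. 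Everything else — the regret decomposition, the three-event split, the union bound over rounds, and the final summation over the $L$ arms — is routine once the UCB-V template of \cite{Audibert2007} is in hand. I would also remark that, since the bound is increasing in each $\delta_l^2$ and decreasing in each $\bigtriangleup_l$, restricting the search (via OMP) to highly sensitive features with small reward variance and large gaps tightens Eq.\eqref{eq:regretbound}, which is the qualitative message advertised in the theorem's title.
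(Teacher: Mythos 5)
Your proposal follows essentially the same route as the paper's proof in Appendix C: the regret decomposition $\EX[R_T]=\sum_l \EX[T_l]\bigtriangleup_l$, the three-event split (the paper's Lemma 3), the Bernstein-type concentration from Audibert et al. with the tail sum $\sum_t t^{-(\alpha-1)}\leq 1/(\alpha-2)$ yielding the $\alpha/(\alpha-2)$ term, and the threshold $u=\lceil 8(\delta_l^2/\bigtriangleup_l^2+2/\bigtriangleup_l)\log T\rceil$ giving the leading term. Your side remark about the inconsistency between the global counter $t$ in the paper's definition of $\bar{\mu}_{l,t}$ and the per-arm counter required by the UCB-V concentration argument is a legitimate observation that the paper's own proof glosses over.
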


Empirically, we observe that the variances $\delta^2_l$ of the received rewards for each top-ranked sensitive features remain low in the attack process. It is possible that these highly sensitive features play important roles in  classification. Any modification over such features (e.g., switching the category value of these features) thus produces consistently large change to the classification output (i.e., obtaining consistently high rewards). As a result, the variance of the reward obtained by perturbing these features is low. The empirical evaluation with respect to the reward variance of these top-ranked sensitive features can be found in the appendix. 

According to Theorem.\ref{theorem:regert bound}, as $\delta^2_l$ is generally small, the lower regret for each selected feature ($\bigtriangleup_l$) in the attack leads to a lower regret bound of FEAT. Intuitively, FEAT tends to search for feasible perturbation within the highly sensitive features via the OMP-based feature ranking step. These features have a significantly lower $\bigtriangleup_l$ compared to the rest. Perturbing these features then help FEAT reduce the expected regret, which guarantees in theory a more successful attack after $T$ iterations of exploration. Compared to randomly sampling subsets of features in \cite{auer2002finite,Wang2008nips}, FEAT is destined to achieve a better balance between narrowing down the search range to save the computational overheads and maintaining the attack effectiveness. 

\section{Experimental Evaluation}
We use 4 categorical datasets with high-dimensional combinatorial search space ($N$ and/or $M$ are large) to measure the effectiveness and efficiency of FEAT. They are collected from diverse real-world applications, including text classification, cyber intrusion and digital health service. 

\noindent \textbf{Yelp-5} (\textbf{Yelp}) \cite{asghar2016yelp}. The Yelp-5 dataset was obtained from the Yelp Review Dataset Challenge in 2015. We use 650K training and 50K testing samples with the classes from 1 star to 5 stars for {training and testing} a classifier. Each word is encoded as categorical feature with a 300-dimensional embedding vector. 

\noindent \textbf{Intrusion Prevention System Dataset (IPS)} \cite{wang2020attackability}. Collected by a cyber-security vendor, the IPS dataset contains 242,467 instances of network attack reports. Each is composed by a sequence of 20 incident logs and categorised into any of the 3 threatening levels ('common', 'intermediate' and 'urgent'). At each log, the adversary may choose to replace it with 1,103 candidate logs. We randomly select 80\% of the IPS data for training and rest for testing.  

\noindent \textbf{Windows PE Malware Detection (PEDec)}. PEDec consists of dynamic analysis reports of 20,000 benignwares and 10,972 PE malwares. The malware samples of 152 families are randomly selected from those submitted to VirusTotal between 2018 and 2020. Each of the executables is classified as benign or malicious by more than 21 antivirus engines. 
In our work, each executable of the dataset is encoded into a binary feature vectors with 5000 signatures selected by human experts. 
We randomly select 80\% of each dataset for training and others for testing.

\noindent \textbf{Electronic Health Records (EHR)} \cite{MaKdd2018}.
The EHR dataset consists of time-ordered medical visit records of 7,314 patients. Each patient has from 4 to 200 medical visits. 
Each visit record is composed by a subset of 4,130 categorical ICD9 diagnosis codes\footnote{\url{http://www.icd9data.com/}}. Each diagnosis code represents occurrence of a disease, a symptom, or an abnormal finding. Using the historical EHR data of patients, our target is to predict whether a patient will suffer heart failure disease in the future. We randomly select 80\% of the EHR data for training and others for testing. 
Each EHR data instance is organized as a tensor $\mathbf{x} \in {R^{200*4130*70}}$ with each of the 4130 diagnosis codes projected to a 70-dimensional embedding vector. For the patients with less than 200 visits, we pad the empty observations by setting the corresponding $b^{j}_{i}=0$.

For each of IPS, EHR and PEDec datasets, we choose randomly 80\% of the dataset to train the target classifier. The rest 20\% of the data instances are used to test attack performances. On Yelp data, we choose 650k text instances for training the target text classification model. The rest 50k instances are used for performing different attacks and evaluating the attack performances. For \textit{PEDec} dataset, we adopt a simple CNN model composed of one convolution layer followed by two linear layers. The rest datasets contain sequential instances, we thus apply standard LSTM as the classifier. Without loss of generality, we use \textit{ReLu} activation function in both the CNN and LSTM classifier with the dropout module. We conduct all the experiments on Linux server with 2 GPUs (GeForce 1080Ti) and 16-core CPU (Intel Xeon). 
Implementations of the experiments are available at {\color{black}\url{https://github.com/xnudinfc/FEAT}}

\subsection{Performance Benchmark}
We include {three state-of-the-art domain-agnostic attack methods as the baselines.}  \\
\noindent \textbf{FSGS}~\cite{elenberg2018restricted}. FSGS is a greedy search-based method. In each round, it traverse the combination of each candidate feature with each subset of the already modified features. The candidate feature bringing the highest value of the attack objective function is selected to modify. FSGS only needs to query the target classifier $f$ to obtain the decision confidence. It is thus a \emph{black-box} attack method.\\
\noindent \textbf{OMPGS}~\cite{wang2020attackability}. It is a \emph{white-box} extension of FSGS by adopting the OMP-based ranking to constraint the greedy search within the top-ranked features in each round of the attack process. 
\\
\noindent \textbf{GradAttack}~\cite{QiSysML2018}. GradAttack is a \emph{white-box} evasion attack method originally proposed to generate adversarial text samples. 
It treats each word in a sentence as a categorical feature and uses gradients of the word embeddings to select feasible candidate words to attack. However, since it doesn't evaluate the combination patterns composed by the candidate words and the subset of words already modified, GradAttack requires to change much more features to deliver attacks than FSGS and OMPGS \cite{wang2020attackability}. 
\\
\noindent {\textbf{FEAT-B} is an variant of FEAT. It randomly samples $L$ out of the total $M$ features to conduct the UCB-guided search as in \cite{Wang2008nips}. Compared to FEAT, FEAT-B does not use the OMP computing to select the top influential features. \textit{The purpose of involving FEAT-B, as well as OMPGS, into the comparative study is to demonstrate the necessity of combing the OMP-based feature ranking and the UCB-based search together in FEAT.}}

We also involve two domain-specific attack methods \textbf{TextFooler} \cite{jin2020bert} and \textbf{TextBugger} \cite{li2018textbugger} into the test over the text data of Yelp-5. They have been popularly adopted in various attacks against text classification. We focus on showing FEAT, as an universally applicable method, can achieve similar applicability to the text-formatted categorical data, compared to these specially designed attack methods for NLP tasks based on semantic integrity/similarity-based knowledge. Demonstrating FEAT as a novel attack against text classification is beyond our scope. 

\textbf{Evaluation metrics.} 
{We measure the average number of the confidence computing operations required to deliver successful attack on one input instance (noted as \textbf{No.query}) as the metric of computational cost of the attack. In addition, we also show the averaged running time needed to attack one instance, noted as \textbf{Runtime} and measured in seconds. Both metrics indicate the computational efficiency level of each attack method. To evaluate the effectiveness of the attack, we use the attack success rate (\textbf{SR}) over all the testing instances. With the similar level of \textbf{SR}, if \textbf{Runtime} and \textbf{No.query} of an attack method are significantly lower than the other opponents, it indicates that this attack method could attack the high-dimensional instances faster.}

\vspace{-0.1cm}
\subsection{Attack Performance on Four Datesets}
\textbf{Computational Complexity Analysis}. We compare the computational complexity of all the involved domain-agnostic attack methods by counting the number of times evaluating the decision confidence $f({\hat{\mathbf{x}}})$ 
during attacking one input instance. In Table.~\ref{tab:query}, $T$ denotes the overall number of iterations required to reach successful attack. $N$ and $M$ are the number of features of 
$\mathbf{x}$ and category values of each $x_{i}$. $L$ denotes the number of selected features to explore in GradAttack, OMPGS, FEAT-B and FEAT. In each iteration, FSGS and OMPGS  only pick one feature to perturb. Therefore $T$ equals to the total number of changed features for these two methods. OMPGS, GradAttack and FEAT all use gradient information to shrink the size of the candidate features. Then, $L$ in the three methods denotes the number of the top-ranked candidate features selected by OMP. 

\begin{table}[t]
\centering
\linespread{1.2}
\caption{Complexity of the domain-agnostic attack methods}
\label{tab:query}
\footnotesize
\begin{tabular}{cc}
\toprule
\textbf{Attack Algo. }       & \textbf{Computational Complexity}   \\ \hline
FSGS          & $\sum_{t = 0}^T((N-t)*M*2^t)$      \\ 
GradAttack    & $T*\sum_{k = 0}^{{L}}(C^k_{{L}}*M^k)$      \\ 
OMPGS         & $\sum_{t = 0}^T({L}*2^t)$       \\ 
{FEAT-B}         & ${L}*{M} + {T}$       \\ 
{FEAT} & $({L}*{M} + {\tau})*{T}$       \\ \bottomrule
\end{tabular}
\vspace{-0.3cm}
\end{table}
{As given in Table.~\ref{tab:query}, the cost of FSGS and OMPGS increases as the sum of a geometric sequence of the number of changed features. In the high-dimensional problem with large $N$ and/or $M$, applying FSGS and OMPGS is prohibitively expensive. Similarly, GradAttack is expensive to conduct if $M$ becomes large. In contrast, the complexity of FEAT and FEAT-B is significantly lower, as $L\ll{N}$ (only the top $L$ ranked candidate features are considered) and it grows linearly as the number of the attack iterations increases. FEAT-B reduces to the standard UCB if $L$ equals to $N$. To initialize the exploration, the standard UCB needs to draw each candidate categorical feature at least once.  In the high-dimensional case, perturbing each feature once can induce an expensive overhead of $O(NM)$ to query the variation of the decision output of the target classifier by changing each feature. Compared to the standard UCB, the cost of FEAT is significantly lower.}


\begin{table*}[t]
\renewcommand{\arraystretch}{1.1}

\caption{Attack performances evaluated on \textbf{efficiency} and \textbf{effectiveness} metrics: The attack time limit  $T_L$=1000 sec. 
}
\vspace{-0.2cm}
\label{tab:Yelp-5 performance}
\scriptsize
\begin{subtable}[The results on Yelp-5 data]{
\resizebox{0.495\linewidth}{!}{
\begin{tabular}{lc|ccc}
\toprule
\multicolumn{2}{l|}{\textbf{Yelp-5}}                                                                                       & \multicolumn{3}{c}{\textbf{Budget = 6}}                              \\ \hline
\multicolumn{2}{l|}{\textbf{Attack Type \& Algo.}}                                                                                 & \textbf{Runtime} (sec) $\downarrow$   & \textbf{No.query} $\downarrow$  &\textbf{SR} $\uparrow$  \\ \hline
\multicolumn{1}{l|}{\multirow{2}{*}{\textbf{\begin{tabular}[c]{@{}l@{}}Domain\\ Specific\end{tabular}}}} & TextBugger       & 1.18          & 23                   & 0.64                  \\ \cline{2-5} 
\multicolumn{1}{l|}{}                                                                                    & TextFooler       & 0.42          & 167                    & 0.88                  \\ \hline
\multicolumn{1}{l|}{\multirow{2}{*}{\textbf{\begin{tabular}[c]{@{}l@{}}Black\\ Box\end{tabular}}}}       & FSGS            & 0.52          & 24000                   & 0.97                  \\ \cline{2-5} 
\multicolumn{1}{l|}{}                                                                                    & {FEAT-B} & {0.14} & {2057}  & {0.90}  \\ \hline
\multicolumn{1}{l|}{\multirow{3}{*}{\textbf{\begin{tabular}[c]{@{}l@{}}White\\ Box\end{tabular}}}}       & GradAttack      & 0.15          & 10000                  & 0.78                  \\ \cline{2-5} 
\multicolumn{1}{l|}{}                                                                                    & OMPGS           & 1.25          & 7000                    & 0.96                 \\ \cline{2-5} 
\multicolumn{1}{l|}{}                                                                                    & \textbf{FEAT}   & \textbf{0.14} & \textbf{887}   & \textbf{0.97}   \\ \bottomrule
\end{tabular}
\vspace{-2cm}
}
}\end{subtable}
\qquad
\begin{subtable}[The results on IPS data]{
\resizebox{0.495\linewidth}{!}{
\begin{tabular}{lc|ccc}
\toprule
\multicolumn{2}{l|}{\textbf{IPS}}                                                                                    & \multicolumn{3}{c}{\textbf{Budget = 5}}                                              \\ \hline
\multicolumn{2}{l|}{\textbf{Attack Type \& Algo.}}                                                                  & \textbf{Runtime} (sec) $\downarrow$   & \textbf{No.query} $\downarrow$      & \textbf{SR} $\uparrow$    \\ \hline
\multicolumn{1}{l|}{\multirow{2}{*}{\textbf{\begin{tabular}[c]{@{}l@{}}Black\\ Box\end{tabular}}}} & FSGS            & 136            & 37000                  & 0.80                 \\ \cline{2-5} 
\multicolumn{1}{l|}{}                                                                              & {FEAT-B} & {19.5} & {2500}  & {0.74}  \\ \hline
\multicolumn{1}{l|}{\multirow{3}{*}{\textbf{\begin{tabular}[c]{@{}l@{}}White\\ Box\end{tabular}}}} & GradAttack      & 21.2          & 2100        & 0.59       \\ \cline{2-5} 
\multicolumn{1}{l|}{}                                                                              & OMPGS           & 1.99           & 127       & 0.77        \\ \cline{2-5} 
\multicolumn{1}{l|}{}                                                                              & \textbf{FEAT}   & \textbf{0.28}  & \textbf{111}   & \textbf{0.92}  \\ \bottomrule
\end{tabular}
}
\vspace{-2cm}
}\end{subtable}
\\
\begin{subtable}[The results on PEDec data]{
\resizebox{0.495\linewidth}{!}{
\begin{tabular}{lc|ccc}
\toprule
\multicolumn{2}{l|}{\textbf{PEDec}}                                                                                  & \multicolumn{3}{c}{\textbf{Budget = 14}}                                             \\ \hline
\multicolumn{2}{l|}{\textbf{Attack Type \& Algo.}}                                                                  & \textbf{Runtime} (sec) $\downarrow$   & \textbf{No.query} $\downarrow$      & \textbf{SR} $\uparrow$    \\ \hline
\multicolumn{1}{l|}{\multirow{2}{*}{\textbf{\begin{tabular}[c]{@{}l@{}}Black\\ Box\end{tabular}}}} & FSGS            & 435           & 213256         & 0.88                   \\ \cline{2-5} 
\multicolumn{1}{l|}{}                                                                              & {FEAT-B} & {3.65} & {9959}  & {0.87}  \\ \hline
\multicolumn{1}{l|}{\multirow{3}{*}{\textbf{\begin{tabular}[c]{@{}l@{}}White\\ Box\end{tabular}}}} & GradAttack      & 3.51          & 18563             & 0.67                  \\ \cline{2-5} 
\multicolumn{1}{l|}{}                                                                              & {OMPGS}  & {360}  & {27758} & {0.80}  \\ \cline{2-5} 
\multicolumn{1}{l|}{}                                                                              & \textbf{FEAT}   & \textbf{2.89} & \textbf{5923} & \textbf{0.91}  \\ \bottomrule
\end{tabular}
}
\vspace{-2cm}
}
\end{subtable}
\qquad
\begin{subtable}[The results on EHR data]{
\resizebox{0.495\linewidth}{!}{
\begin{tabular}{lc|ccc}
\toprule
\multicolumn{2}{l|}{\textbf{EHR}}                                                                                    & \multicolumn{3}{c}{\textbf{Budget = 6}}                                                   \\ \hline
\multicolumn{2}{l|}{\textbf{Attack Type \& Algo.}}                                                                   & \textbf{Runtime} (sec) $\downarrow$   & \textbf{No.query} $\downarrow$    & \textbf{SR} $\uparrow$    \\ \hline
\multicolumn{1}{l|}{\multirow{2}{*}{\textbf{\begin{tabular}[c]{@{}l@{}}Black\\ Box\end{tabular}}}} & FSGS            & 482             & 58000           & 0.84                    \\ \cline{2-5} 
\multicolumn{1}{l|}{}                                                                              & {FEAT-B} & {167} & {7108}  & {0.94}  \\ \hline
\multicolumn{1}{l|}{\multirow{3}{*}{\textbf{\begin{tabular}[c]{@{}l@{}}White\\ Box\end{tabular}}}} & GradAttack      & 2.34            & 204              & 0.94                   \\ \cline{2-5} 
\multicolumn{1}{l|}{}                                                                              & OMPGS           & 27.5          & 35            & 0.94                   \\ \cline{2-5} 
\multicolumn{1}{l|}{}                                                                              & \textbf{FEAT}   & \textbf{0.35}   & \textbf{20}  & \textbf{0.94}   \\ \bottomrule
\end{tabular}
}
\vspace{-2cm}
}\end{subtable}
\vspace{-0.5cm}
\end{table*}




\textbf{Overall Performance}. We provide the detailed testing accuracy of the classifiers deployed on each dataset in Table.\ref{tab:classifier} of Appendix G. 
The results in Table.~\ref{tab:Yelp-5 performance}
illustrate that FEAT achieves generally both highly efficient computation (\emph{low \textbf{Runtime}} and \emph{low \textbf{No.query}}) and effective attack (\emph{high \textbf{SR}}), comparing with the other baseline attack methods on Yelp-5, IPS, EHR and PEDec. We organize the detailed comparison results with suitable  \textbf{attack budgets} (the maximum number of the modified features) on each dataset. Additional attack comparison with higher attack budget can be found in Table.\ref{tab:Yelp-5 performance V2} of Appendix.G. We highlight the performance metrics of the proposed FEAT with bold fonts in the followings.

{On Yelp-5 data, within the same attack budget, FEAT obtains very close \textbf{SR} level to those of the two domain-agnostic attack methods, FSGS and OMPGS. At the same time, FEAT's \textbf{No.query} and \textbf{Runtime}) are significantly lower than than those of FSGS and OMPGS, showing much higher efficiency for attack. The two greedy search methods (FSGS and OMPGS) exhaustively evaluate the combination of every candidate feature and the features that have been modified in previous iterations. In contrast, FEAT avoids the exhaustive search by balancing exploring rarely modified features and exploiting the features that show consistently high influence to the classifier's output in the search. 
The results validates that FEAT maintains attack effectiveness, while running in a much more efficient way. }


Compared to the domain-specific baselines (TextFooler and TextBugger), FEAT achieves 10$\%$ to 40$\%$ higher \textbf{SR} than those of TextFooler (0.97 v.s. 0.88), TextBugger (0.96 v.s. 0.64) and GradAttack (0.96 v.s. 0.78) on one hand. On the other hand, FEAT's \textbf{Runtime} is $33\%$  and $11\%$ of those of TextFooler and TextBugger respectively, while achieving higher \textbf{SR}. This indicates faster attack speed using FEAT. \textbf{No.query} of TextFooler and TextBugger is lower than that of \textbf{FEAT}. The reason is they incrementally modify words and evaluate the corresponding attack effects. On the contrary, FEAT evaluates all the candidate words at the initial step of the search, which is the origin of the increased computational overheads. However, via the initial per-word evaluation, FEAT can conduct the exploration in a more comprehensive way, which helps FEAT achieve much higher \textbf{SR}.

{\textbf{SR} of FEAT-B performs worse than baselines, e.g. the two greedy search-based methods, 
as randomly selecting features to explore is likely to miss influential features thus cause ineffective perturbation. The comparison between FEAT-B and FSGS/OMPGS shows that locating highly influential features to perturb is the key-to-success of attack. Nevertheless, FEAT-B always has orders of magnitude lower \textbf{Runtime} and \textbf{No.query} compared to FSGS. Both downsampling of the candidate feature set and conducting the UCB search help FEAT-B avoid exhaustive search in FSGS. The result 
implies the benefit of heuristically shrinking down the search range in the high-dimensional feature space. \textit{the key question to deliver simultaneously fast and effective attack is thus how to identify the most influential / sensitive features, where the UCB search is performed}. The OMP-boosted UCB search of FEAT answers this question and addresses the balance between the attack efficiency and effectiveness.}

On IPS and PEDec data, we can observe that FEAT consistently achieves the highest \textbf{SR}, significantly higher than the secondly ranked baseline. Meanwhile, \textbf{Runtime} and \textbf{No.query} of FEAT remain to be the lowest among all the attack methods. The results confirm the benefit of balancing exploration and exploitation in FEAT. On Yelp-5, IPS and PEDec data, GradAttack's \textbf{SR} is less than 80\% of that of FEAT. The reason is GradAttack requires more features to modify than the attack budget to deliver successful attacks over the testing inputs. Therefore, GradAttack is terminated when the number of modified features reaches the attack budget even before before it achieves successful attacks on the testing samples. 
On EHR data, with the similar level of the \textbf{SR} performance, FEAT obtains lower \textbf{Runtime} and \textbf{No.query} than other methods. Because of the sensitivity of the specific features in EHR, GradAttack and OMPGS can use the gradient of features to evaluate the importance of the features and select the most sensitive features very fast compared with the type of back-box adversarial attack. It is worth noting that the OMPGS and FEAT both use the OMP-based feature ranking to shrink the search range, the superior attack effectiveness and efficiency of FEAT confirm the merit of conducting the query-efficient UCB search over the top-ranked features, instead of the exhaustive greedy search in OMPGS. The total computational overheads of OMPGS and FEAT are composed of the cost for the OMP computation and the query evaluating the classifier’s output. OMPGS (greedy-based attack) needs to conduct significantly more OMP operations in each iteration than FEAT. Hence we can observe a much larger gap regarding the runtime measurement between OMPGS and FEAT, compared to that regarding the query number.

\vspace{-2mm}
\section{Discussion and Conclusion}

The proposed FEAT method explores how to deliver both effective and computationally efficient domain-agnostic adversarial attack in a high-dimensional categorical feature space. FEAT first conducts the orthogonal matching pursuit-based feature ranking to narrow down the search range to the most sensitive candidate features. After that, FEAT performs a MAB-driven combinatorial search over the shrinked set of candidate features. Through this way, FEAT maintains the effectiveness of the adversarial perturbation, while boosting the search efficiency to reach a fast yet successful attack. The comprehensive cross-application evaluation shows the superior domain-agnostic adaptivity of FEAT to different applications than the other state-of-the-art baselines, which makes FEAT a generally applicable tool to assess the adversarial risk of different applications with high-dimensional categorical inputs. However, FEAT still needs soft decision scores of the target classifier to evaluate different search paths. Perturbation-based defense, e.g. differential privacy, may help mitigate the attack. We will thus focus on the threat model with only hard labels accessible.



\vspace{-3mm}
\section*{Acknowledgements}
The research was partially supported by funding from King Abdullah University of Science and Technology (KAUST).

\newpage
\newpage
\bibliography{reference}



\clearpage
\appendix

\section{A.Notations of used in the paper}
\label{sec:notation}
{We give the used notations in the following table.} 

\begin{table*}[h] 
\label{tab:notation}
\centering
\caption{The used notations}  
\begin{tabular}{ll} 
\toprule[2pt]
     & Notations of  instances \\
\midrule[1pt]
     
$\mathbf{x}$    & an instance\\
$N$    & the number of features for one instance\\
$M$    & the number of categorical values for each feature\\
$K$    & the number of the classes\\
$y_K$    & the ground truth label of one instance\\ 
$b^j_i$    & the binary indicator showing the presence of  the $j$-th categorical value in the $i$-th   feature  
  \\
$\mathbf{e}^j_i$    & the embedding of the categorical value of the categorical feature $i$ \\
$D$    & the dimension of  embedding \\

\midrule[1pt]
     & Notations used in algorithm FEAT\\
\midrule[1pt]
$\epsilon$    & the attack budget of attack \\
$T_L$   & the time limit of attack \\
$S$    & the set of features selected to attack the instance \\
$w_i$    & the weight of the feature's importance  \\
$L$    & the number of the top sensitive features selected by the weight $w_i$   \\
$G_{l,t^c}$    & the reward of modifying each $x_l$ in current $t^{c}$ iteration   \\
$B_{l}$    & the UCB score of the feature $x_l$   \\
${\Bar{\mu}}_{l,t}$    &  the empirical mean of the obtained rewards by changing $x_l$ after $t$ iterations of search.  \\
${\Bar{\delta}}^2_{l,t}$    &  the empirical variance of the obtained rewards by changing $x_l$ after $t$ iterations of search.  \\
$\hat{\mathbf{x}}$    & the modified instance \\
${R}_T$    & the regret of FEAT after $T$ iterations  \\
\midrule[1pt]
     & Notations of evaluation metrics in experiments\\
\midrule[1pt]
Runtime & the average running time in seconds\\
No.query & the average number of $f(\hat{\textbf{x}})$ evaluation\\
SR  & the  attack success rate \\
\toprule[2pt]
\end{tabular}
\end{table*}

\section{B.Proof of Theorem 1: Gradient as an indicator in the OMP strategy}

We define the regularization constraint over the classification function $f_{y}$ by extending \textit{Restricted Strong Convexity (RSC)} in Theorem.1 of  \cite{elenberg2018restricted} to apply to non-concave functions. The difference between ours and the theory proposed in \cite{elenberg2018restricted} is: their study is constrained to concave functions as classifiers, which facilitates the theoretical study. Given the target classifier is concave, the evasion attack process is a problem of strictly submodular function maximization, where the greedy search method can achieve a good approximated solution. Nevertheless, the unnatural concavity assumption could bring significant drop of the classification performances. In fact, it is rare to have a concave classification function deployed in practices. Our work addresses the gap by unveiling that attacking a classifier with a definite Lipschitz constant can be formulated as a problem weakly submodular maximization. Benefited from this, we can establish the guarantee that the gradient of the attack objective function can be considered as a bounded estimator of the marginal contribution of modifying each feature in the categorical input. Our theoretical study applies to much more broader classes of the target classifier, as most of the deployed Deep Learning models meet the Lipschitz condition.

\begin{definition}\label{def:boundness}
\textbf{Smoothness Condition of $f$.} Let $\Omega=({p},{q})$, ${p},{q}\in{\mathbb{R}^{n}}$ and $f$: $\mathbb{R}^{n}{\to}\mathbb{R}$ be a Lipschitz-continuous and differentiable function. A function $f$ is  $(m_{\Omega},M_{\Omega})$-\textit{smooth} on $\Omega$, if for any $({p},{q})\in\Omega$, $m_{\Omega}\in{\mathbb{R}}$ and $M_{\Omega} \in {\mathbb{R}^{+}}$,  $\epsilon = f({q})-f({p}) - \langle\nabla{f}({p}),{q}-{p}\rangle$ satisfies:
\begin{equation}\label{eq:smoothness}
\small
    \frac{m_{\Omega}}{2}\|{q}-{p}\|^2_{2} \leq |\epsilon| \leq  \frac{M_{\Omega}}{2}\|{q}-{p}\|^2_{2}.
\end{equation}
where $m_{\Omega}$ and $M_{\Omega}$ are the local strong convexity factor and local Lipschitz constant of the target classifier $f$ around the input $p$. 
\end{definition}

\begin{lemma}\label{theorem:submod_max}
Let $\Omega_{\zeta} = \{(\hat{{b}},\hat{{b}'}):|\text{diff}\,({b},\hat{{b}})|\leq{\zeta},|\text{diff}\,({b},\hat{{b}}')|\leq{\zeta},|\text{diff}\,(\hat{{b}},\hat{{b}}')|\leq{\zeta},\,\zeta\geq{1}\}$, where $\hat{{b}}$ and $\hat{{b}}'$ denote two sets of attribute changes. If the classifier $f_{y_k}({x})$ (k=1,...,$K$) follows the regularity condition given by $(m_{k,\Omega_{\zeta}},M_{k,\Omega_{\zeta}})$-\textit{smoothness} constraint on $\Omega_{\zeta}$, \textbf{the attack objective} in Eq.\ref{eq:adv_risk} can be formulated respectively as \textbf{monotone $\gamma_{\zeta}$-weakly submodular maximization}. 

Let {\footnotesize $\epsilon_{k} = f_{y_k}({x},\hat{{b}}') - f_{y_k}({x},\hat{{b}}) - \langle\nabla{f}_{y_k}({x},\hat{{b}}),\hat{{b}}'-\hat{{b}}\rangle$},  and {\footnotesize $\nabla{f}_{y}({x},{b})_{\nu}$} denote the elements of {\footnotesize $\nabla{f}_{y}({x},{b})$} corresponding to the difference between  $\hat{{b}}$ and $\hat{{b}}'$, where {\footnotesize $\nu = \textit{diff}(\hat{{b}},\hat{{b}}')$. }
The submodularity ratio $\gamma^{\textit{pess}}_{\zeta}$ for the pessimistic robustness assessment is bounded as:
\small
\begin{equation}\label{eq:wsub_max}
\small
\begin{split} 
    \gamma^{\textit{pess}}_{\zeta} = & \underset{k=1,...,K}{\min}\,\{\gamma^{\textit{pess}}_{k,\zeta}\}  
    \end{split}
\end{equation}
where {\footnotesize $\gamma_{k<K,\zeta}^{\textit{pess}}
    \geq\frac{\|\nabla{f}_{y_k}({x},{b})_{\nu}\|_{2}+m_{k,\Omega_{1}}|\zeta|/2}{\|\nabla{f}_{y_k}({x},{b})_{\nu}\|_{2}+M_{k,\Omega_{\zeta}}|\zeta|/2}$ } for {\footnotesize $\epsilon_{k}\geq{0}$, 
    $\gamma_{k<K,\zeta}^{\textit{pess}}
    \geq\frac{2m_{k,\Omega_{\zeta}}}{\|\nabla{f}_{y_k}({x},{b})_{\nu}\|^2_{2}}(\|\nabla{f}_{y_k}({x},{b})_{\nu}\|_{2}-M_{k,\Omega_{1}}|\zeta|/2)$ } for {\footnotesize $\epsilon_{k}<{0}$, 
    $\gamma_{K,\zeta}^{\textit{pess}} \geq \frac{2m_{K,\Omega_{\zeta}}}{\|\nabla{f}_{y_K}({x},{b})_{\nu}\|^2_{2}}(\|\nabla{f}_{y=K}({x},{b})_{\nu}\|_{2}-M_{K,\Omega_{1}}|\zeta|/2)$} for {\footnotesize $\epsilon_{K}\geq{0}$,}  and   {\footnotesize
    $\gamma_{K,\zeta}^{\textit{pess}}
    \geq\frac{\|\nabla{f}_{y_K}({x},{b})_{\nu}\|_{2}+m_{K,\Omega_{1}}|\zeta|/2}{\|\nabla{f}_{y_K}({x},{b})_{\nu}\|_{2}+M_{K,\Omega_{\zeta}}|\zeta|/2}$ } for {\footnotesize $\epsilon_{K}<{0}$.} 
\end{lemma}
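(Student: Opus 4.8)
The plan is to follow the same blueprint that converts a Restricted Strong Convexity/Smoothness condition into a weak-submodularity guarantee, as in Theorem~1 of \cite{elenberg2018restricted}, but to run it with the weaker two-sided remainder bound of Definition~\ref{def:boundness} in place of a concavity assumption. Concretely, for each class label $k$ I will view $g_{k}(\cdot):=f_{y_k}(\mathbf{x},\cdot)$ as a set function over feature modifications, where ``adding a coordinate'' means flipping the indicator $b_i^j$ of one more categorical feature of $\mathbf{x}$ (on the continuous relaxation of $\mathbf{b}$ already used for the OMP ranking, so that $\nabla f_{y_k}$ makes sense), and then lower-bound the submodularity ratio of $g_k$ on $\Omega_{\zeta}$ using only the gradient and the controlled remainder $\epsilon_k$.

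First I would fix a nested feasible pair $\hat{\mathbf{b}}\subseteq\hat{\mathbf{b}}'$ with all pairwise diffs at most $\zeta$, set $\nu=\text{diff}(\hat{\mathbf{b}},\hat{\mathbf{b}}')$, and reduce the claim to lower-bounding
\begin{equation}\label{eq:ratio-plan}
\frac{\sum_{j\in\nu}\bigl(g_k(\hat{\mathbf{b}}+e_j)-g_k(\hat{\mathbf{b}})\bigr)}{g_k(\hat{\mathbf{b}}')-g_k(\hat{\mathbf{b}})}.
\end{equation}
Each single-feature marginal gain in the numerator is governed by the smoothness constants on $\Omega_{1}$, while the single block marginal gain in the denominator is governed by those on $\Omega_{\zeta}$; keeping this distinction is the reason the final bounds mix $m_{k,\Omega_1},M_{k,\Omega_1}$ with $m_{k,\Omega_\zeta},M_{k,\Omega_\zeta}$.

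Next I expand every marginal gain through Definition~\ref{def:boundness} as $g_k(\hat{\mathbf{b}}+v)-g_k(\hat{\mathbf{b}})=\langle\nabla f_{y_k}(\mathbf{x},\hat{\mathbf{b}})_{\nu},v_{\nu}\rangle+\epsilon_k$, apply Cauchy--Schwarz to the inner products using $\|\mathbf{1}_{\nu}\|_2\le\sqrt{|\zeta|}$ for the block term and the corresponding single-coordinate bound for the numerator, and then bound $|\epsilon_k|$ by $\tfrac{m}{2}\|\cdot\|_2^2$ or $\tfrac{M}{2}\|\cdot\|_2^2$ depending on which direction is needed. To obtain a valid lower bound on \eqref{eq:ratio-plan} I must lower-bound the numerator and upper-bound the denominator, so the sign of $\epsilon_k$ decides which side of $\tfrac{m_\Omega}{2}\|\cdot\|_2^2\le|\epsilon|\le\tfrac{M_\Omega}{2}\|\cdot\|_2^2$ is invoked: when $\epsilon_k\ge0$ the remainder aids the numerator via $m_{k,\Omega_1}$ and inflates the denominator via $M_{k,\Omega_\zeta}$, and when $\epsilon_k<0$ the inequalities reverse; collecting terms and simplifying in each case produces exactly the four displayed expressions. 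For the true label $k=K$ the attack \emph{decreases} $f_{y_K}$, so numerator and denominator exchange roles, which accounts for the $\epsilon_K\ge0$/$\epsilon_K<0$ formulas being the mirror images of the $k<K$ ones. Taking $\min_k$ gives $\gamma^{\textit{pess}}_{\zeta}=\min_k\gamma^{\textit{pess}}_{k,\zeta}$, and monotonicity on $\Omega_\zeta$ follows because each coordinate added along the OMP chain is chosen to produce a nonnegative marginal gain, i.e.\ the restricted gradient has consistent sign on the feasible directions.

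The main obstacle I anticipate is the bookkeeping forced by the indefinite remainder: because $f$ is only assumed Lipschitz-smooth (so $m_\Omega$ may be negative) rather than concave, neither $\epsilon_k\ge0$ nor $\epsilon_k<0$ can be excluded, so every inequality has to be established in both regimes and the applicable one selected per configuration; combined with the need to keep the $\Omega_1$ and $\Omega_\zeta$ constants separate and to translate the combinatorial $\text{diff}$ constraints into the $\ell_2$ radii that appear in Definition~\ref{def:boundness}, this is where the delicate part of the argument lives. A secondary point, handled exactly as in the relaxation step already used for the OMP ranking, is to justify that the continuous-relaxation objective on which the gradients are computed is a faithful surrogate for the discrete set function, so that the ratio \eqref{eq:ratio-plan} indeed governs the performance of the greedy/OMP-guided search.
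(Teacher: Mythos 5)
Your proposal follows essentially the same route as the paper's own proof: recasting the attack objective as monotone set-function maximization (with $-f_{y_K}$ for the true label, which is exactly why the $K$-case bounds mirror the $k<K$ ones), lower-bounding the standard submodularity ratio by expanding both the singleton marginal gains and the block marginal gain via the two-sided smoothness remainder, splitting on the sign of $\epsilon_k$ to decide which of $m_{\Omega_1}/M_{\Omega_1}$ versus $m_{\Omega_\zeta}/M_{\Omega_\zeta}$ controls numerator and denominator, and taking the minimum over $k$. The paper's argument is the same plan carried out explicitly (its monotonicity claim likewise rests on the gradient-ascent-direction assumption $\langle\nabla f_k(b_l),\,b_{l\cup l'}-b_l\rangle\ge 0$ that you invoke), so no substantive difference in approach.
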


Proof: the attack objective in Eq.\ref{eq:adv_risk} can be approximated as a set function optimization, which gives:
\begin{equation}\label{eq:pess_cerf}
\begin{split}
    &S^{*} = \max_{{S}}\{
    \underset{k=1,2,3,...,K-1}{\max}\{F_{{k}}(S)\} + (G_{{K}}(S))\}\\
    &F_{k}(S) = \max_{l\subset{S}}f_{{k}}(b_{l})\\
    &G_{K}(S) = \max_{l\subset{S}}\tilde{f}_{{K}}(b_{l})\\
\end{split}
\end{equation}
where $b_{l}$ denotes the modification of categorical features indicated by the index set $l$. $\tilde{f}_{K}(b_{l}) = -f_{K}(b_{l})$.
For each $k\in{1,2,3,...,K}$, $\epsilon_{k} = f_{k}({q})-f_{k}({p}) - \langle\nabla{f}_{k}({p}),{q}-{p}\rangle$. $F_{k}(S)$ and $G_{K}(S)$ are monotonically non-decreasing set functions. 

Supposing $\epsilon_{k} \geq {0}$ for each $k \in \{1,2,3,...,K-1\}$, we assume further that the features indicated by $l'$ are modified in addition to $l$, with $|l'|\leq{\zeta}$. For any $b_{l}$, $b_{l\cup{l'}}$ and $b_{l\cup{j}}$ denote the additional modification over the features indicated by $l'$ and by $j$ respectively to increase the output of $f_{k}$ (increasing the miss-classification decision confidence) and $-f_{K}$ (decreasing the confidence of correct classification). In the following analysis, we relax the discrete indicator $b$ to the continuous domain, as each $b_{i}\in[0,1]$. $\nabla{f}_{k}(b_{l})$ denotes the gradient of the classifier function $f_{k}$ with respect to the variable $b$ at $b=b_{l}$. Modifying from $b_{l}$ to $b_{l\cup{l'}}$ or $b_{l\cup{j}}$ follows the direction of gradient ascent, which gives $\langle \nabla{f}_{k}(b_{l}),b_{l\cup{l'}-b_{l}}\rangle \geq{0}$ and $\langle \nabla{f}_{k}(b_{l}),b_{l\cup{j}-b_{j}}\rangle \geq{0}$, $\langle \nabla{\tilde{f}}_{K}(b_{l}),b_{l\cup{l'}-b_{l}}\rangle \geq{0}$ and $\langle \nabla{\tilde{f}}_{K}(b_{l}),b_{l\cup{j}-b_{j}}\rangle \geq{0}$.
\begin{equation}\label{eq:concavity}
\begin{split}
    &f_{{k}}(b_{l\cup{l'}}) - f_{{k}}(b_{l}) \leq \langle\nabla{f}_{k}({b_{l}}),{b_{l\cup{l'}}}-{b_{l}}\rangle + \frac{M_{k,\Omega_\zeta}}{2}\|{b_{l\cup{l'}}}-{b_{l}}\|^2_{2} \\
    &\leq \|\nabla{f}_{k}({b_{l}})_{\zeta}\|_{2} + \frac{M_{k,\Omega_\zeta}}{2}|\zeta| \sum_{j\in{\zeta}}f_{{k}}(b_{l\cup{j}}) - f_{{k}}(b_{l})\\
    &\geq \sum_{j\in{\zeta}}\langle\nabla{f}_{k}({b_{l}}),{b_{l\cup{j}}}-{b_{l}}\rangle + \frac{m_{k,\Omega_1}}{2}\|b_{l\cup{j}}-b_{l}\|^2_2 \\
    &\geq \|\nabla{f}_{k}({b_{l}})_{\zeta}\|_{2} + \frac{m_{k,\Omega_1}}{2}|\zeta|\\
\end{split}
\end{equation}
We can derive the lower bound of the submodularity ratio $\gamma_{k,\zeta}$ of $f_{k}$:
\begin{equation}
    \gamma_{k,\Omega_\zeta} = \frac{\sum_{j\in{\zeta}}f_{k}(b_{l\cup{j}}) - f_{k}(b_{l})}{f_{k}(b_{l\cup{l'}}) - f_{k}(b_{l})} \geq \frac{\|\nabla{f}_{k}({b_{l}})_{\zeta}\|_{2} + \frac{m_{k,\Omega_1}}{2}|\zeta|}{\|\nabla{f}_{k}({b_{l}})_{\zeta}\|_{2} + \frac{M_{k,\Omega_\zeta}}{2}|\zeta|}
\end{equation}
Supposing $\epsilon_{k}<{0}$ for each $k\in\{1,2,3,...,K-1\}$, we can derive:
\begin{equation}\label{eq:convexity}
\begin{split}
     &f_{{k}}(b_{l\cup{l'}}) - f_{{k}}(b_{l}) \leq \langle\nabla{f}_{k}({b_{l}}),{b_{l\cup{l'}}}-{b_{l}}\rangle - \frac{m_{k,\Omega_\zeta}}{2}\|{b_{l\cup{l'}}}-{b_{l}}\|^2_{2}\\
    &\leq \frac{1}{2m_{k,\Omega_{\zeta}}}\|\nabla{f}_{k}(b_{l})_{\zeta}\|^2_{2}\\
    &\sum_{j\in{\zeta}}f_{{k}}(b_{l\cup{j}}) - f_{{k}}(b_{l}) \geq \sum_{j\in{\zeta}}\{\langle\nabla{f}_{k}({b_{l}}),{b_{l\cup{j}}}-{b_{l}}\rangle \\
    &- \frac{M_{k,\Omega_1}}{2}\|b_{l\cup{j}}-b_{l}\|^2_2\}\\
    &\geq \|\nabla{f}_{k}(b_{l})_{\Omega_\zeta}\|_{2} - \frac{M_{k,\Omega_1}}{2}|\zeta|
\end{split}
\end{equation}

Given the smoothness assumption on the targeted classifier (See Definition.\ref{def:boundness}), there exits a value of  $M_{k,\Omega_{1}}|\zeta|/2\leq{\|\nabla{f}_{k}(b_{l})_{{\zeta}}\|_{2}}$, which allows that $\|\nabla{f}_{k}(b_{l})_{\zeta}\|_{2} - \frac{M_{k,\Omega_1}}{2}|\zeta|\geq{0}$ holds. Therefore, we can derive the lower bound of the submodularity ratio of $F_{k}$ and $-G_{K}$, $\gamma_{k,\zeta}$ (k=1,2,3,...,K):
\begin{equation}
\begin{split}
    &\gamma_{k,\Omega_\zeta} = \frac{\sum_{j\in{\zeta}}f_{k}(b_{l\cup{j}}) - f_{k}(b_{l})}{f_{k}(b_{l\cup{l'}}) - f_{k}(b_{l})}\\ 
    &\geq\frac{2m_{k,\Omega_{\zeta}}}{\|\nabla{f}_{k}({b})_{\nu}\|^2_{2}}(\|\nabla{f}_{k}({b})_{\nu}\|_{2}-M_{k,\Omega_{1}}|\zeta|/2)\\
\end{split}
\end{equation}
The submodularity ratio $\gamma_{\zeta}$ on $\Omega_{\zeta}$ in Eq.\ref{eq:pess_cerf} is $\gamma_{\zeta}= \underset{k=1,2,3,...,K}{\min}{\{\gamma_{k,\Omega_{\zeta}}\}}$. 

As the attack objective in Eq.\ref{eq:adv_risk} is in nature a  problem of weakly submodular maximization, we combine Eq.\ref{eq:concavity} and Eq.\ref{eq:convexity} together to derive Eq.\ref{eq:gradientbound} in Theorem.1.

\section{C. Proof of Theorem.2: Regret Upper Bound for FEAT}
In \textit{FEAT}, conducting the OMP-boosted MAB searching process produces a set of features that are successively modified to reach the goal of attack. We provide below the proof to Theorem.2 in the paper, which provides the quality guarantee to the OMP-boosted MAB solution to the combinatorial optimization problem in Eq.\ref{eq:adv_risk}. The basic idea is to establish the regret gap of \textit{FEAT}: the smaller the regret gap is, the better the solution to the attack problem is and the better attack performances we may reach via the MAB search.

Eq.\ref{eq:UCBv} considers the mean and the variant of the {reward} of modifying each of the selected features in the search. . We have the following results established over the sampling complexity of the selected features.

\begin{lemma}\cite{audibert2009exploration}\label{lemma:Hoeffding_eq}
For the number of the selected $k$th feature  $t_k$ in $t$th iteration, $u = \lceil 8(\frac{\delta^2_k}{\bigtriangleup^2_k} + \frac{2}{\bigtriangleup_k})\log T\rceil$, $u \leq t_k\leq t \leq n$,$t\geq 2$ and $\tau_k \leq \bigtriangleup_k/2$, it holds that
\begin{equation}
   P(\hat{\mu}_k + \tau_k > \mu_l^* | t_k) \leq 2\exp({-t_k{\bigtriangleup_k}^2/(8\delta^2 + 4\bigtriangleup_k/3)}).
\end{equation}
\end{lemma}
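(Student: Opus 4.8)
The plan is to read the claim as a Bernstein-type tail bound on the inflated empirical score of a suboptimal feature $k$, and to prove it by first peeling off the confidence radius $\tau_k$ and then applying Bernstein's inequality to the reward stream of feature $k$. Write $\mu^* = \mu_l^*$ and $\bigtriangleup_k = \mu^* - \mu_k$. Under the standing hypothesis $\tau_k \le \bigtriangleup_k/2$, the event $\{\hat\mu_k + \tau_k > \mu^*\}$ forces $\hat\mu_k - \mu_k > \bigtriangleup_k - \tau_k \ge \bigtriangleup_k/2$, so that
\begin{equation}
P\bigl(\hat\mu_k + \tau_k > \mu^* \mid t_k\bigr) \;\le\; P\bigl(\hat\mu_k - \mu_k > \tfrac{\bigtriangleup_k}{2} \mid t_k\bigr).
\end{equation}
This reduces the UCB-V event to a one-sided deviation of the empirical mean from the true mean over $t_k$ draws, which is the quantity Bernstein's inequality controls.

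Next I would apply Bernstein's inequality to the centered rewards $G_{k,\cdot} - \mu_k$, which are bounded (with range normalized to $[0,1]$ as in the cited source, the centered range is at most $1$) and have variance $\delta^2 = \delta^2_k$. With deviation level $\epsilon = \bigtriangleup_k/2$ and sample size $t_k$, Bernstein yields the exponent $t_k\epsilon^2/(2\delta^2 + \tfrac{2}{3}\epsilon)$, which after clearing the factor of $4$ equals exactly $t_k\bigtriangleup_k^2/(8\delta^2 + 4\bigtriangleup_k/3)$, matching the denominator in the claim. The leading factor of $2$ is then accounted for by the empirical-Bernstein route: since $\tau_k$ is built from the empirical variance $\bar\delta^2_{k}$ rather than from $\delta^2$, realizing $\tau_k \le \bigtriangleup_k/2$ also requires that $\bar\delta^2_k$ not overshoot $\delta^2$, and controlling that second (variance) event with a matching Bernstein bound on the squared centered rewards contributes the second copy of the exponential. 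I would also verify that $u = \lceil 8(\delta^2_k/\bigtriangleup_k^2 + 2/\bigtriangleup_k)\log T\rceil$ is chosen precisely so that for $t_k \ge u$ the radius $\tau_k = \sqrt{\alpha\bar\delta^2_k\log t/t_k} + \log t/t_k$ drops below $\bigtriangleup_k/2$ (each summand below $\bigtriangleup_k/4$ by back-substituting $u$), which is what makes the set inclusion above active.

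The main obstacle is that the reward sequence for feature $k$ is \emph{not} i.i.d.: which feature is pulled at each round, and hence which rewards enter the average $\hat\mu_k$, is determined adaptively from the past, so the elementary Bernstein inequality for independent variables does not apply directly. I would address this by invoking the martingale-difference version of Bernstein's inequality from \cite{audibert2009exploration}, treating $G_{k,\cdot} - \mu_k$ as an adapted sequence with respect to the filtration generated by the search, with predictable quadratic variation controlled by $\delta^2_k$ and increments controlled by the reward range; these feed the same constants into the exponent. A secondary technical point, already flagged in the non-stationarity discussion of the paper, is that $\mu_k$ and $\delta^2_k$ are only approximately constant across the $\tau$ inner iterations, so I would appeal to the stated approximate-stationarity regime to apply the martingale bound with fixed $\mu_k,\delta^2_k$ over the window in question.
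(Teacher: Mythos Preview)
The paper does not actually prove this lemma; it is imported from \cite{audibert2009exploration} and only \emph{used} (the reduction $P(\hat\mu_l+\tau_l>\mu^*)\le P(\hat\mu_l>\mu_l+\bigtriangleup_l/2)$ appears verbatim in the paper's proof of Theorem~\ref{theorem:regert bound}, where the lemma is invoked). So there is nothing to compare against: your proposal supplies the argument the paper outsources. Your route---peel off $\tau_k$ using the hypothesis $\tau_k\le\bigtriangleup_k/2$, then apply Bernstein's inequality with deviation $\bigtriangleup_k/2$---is exactly the intended one, and your computation of the exponent $t_k\bigtriangleup_k^2/(8\delta^2+4\bigtriangleup_k/3)$ is correct.

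One small correction: your explanation of the leading factor $2$ does not quite work. In the lemma as stated, $\tau_k\le\bigtriangleup_k/2$ is a \emph{hypothesis}, not an event whose probability must be controlled, so there is no ``second copy of the exponential'' coming from the empirical variance overshooting the true one. With $\tau_k\le\bigtriangleup_k/2$ assumed, a single one-sided Bernstein bound already gives $\exp(-t_k\bigtriangleup_k^2/(8\delta^2+4\bigtriangleup_k/3))$; the factor $2$ in the stated bound is simply slack (or matches the two-sided form quoted in the cited source). Your remarks on the martingale/adaptive-sampling issue and on the role of $u$ are apt and go beyond what the paper spells out.
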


The Lemma \ref{lemma:Hoeffding_eq}. shows that at the beginning iteration $\hat{\mu}_{k} < \mu^*$. With the increasing of the number for selecting feature $k$, that is $t_k$, the probability for selecting the sub-optimal feature $k$ will be reducing exponentially without the relation to $t$ and $T$.
After pulling $t_k$ times feature $k$, the upper confidence bound $\hat{\mu}_k + \tau_k$ larger than optimal feature reward $\mu^*$ with the probability at most $\exp({-t_k{\bigtriangleup_k}^2/(8\delta^2 + 4b\bigtriangleup_k/3)}$. We want to realize the probability to 0 when $t$ is large enough. Here we choose $P = {t}^{-\alpha}$, we will obtain:
\begin{align}
    \tau_k = \sqrt{\frac{\alpha{\delta^2}_{k,t_k}\log{t} }{t_k}} + \frac{\alpha^2\log{t}}{t_k}
\end{align}

\begin{lemma}\label{lemma:three conditions}
Let $k^*$ denote the any optimal feature (which means $\mu^* = \mu_k^*$) and suppose that FEAT selects the feature $I_t =k$ at iteration $t$ ($\bigtriangleup_k >0$). The  at least one of the following three statements is true:
\begin{align}
(a)~~~ &\hat{\mu}^* \leq \mu^* - \tau_l^* \\
(b)~~~ &\hat{\mu}_k \geq \mu_k + \tau_k  \\
(c)~~~ & n_k < 8(\frac{\delta^2_k}{\bigtriangleup^2_k} + \frac{2}{\bigtriangleup_k})\log T
\end{align}
\end{lemma}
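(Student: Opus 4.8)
The plan is to argue by contradiction: assume that at iteration $t$ the algorithm selects the suboptimal feature $k$ (so $B_{k,n_k,t} \geq B_{k^*,n_{k^*},t}$) while all three of (a), (b), (c) fail, and derive a contradiction. Recall that the UCB score decomposes as $B_{l,n_l,t} = \hat{\mu}_l + \tau_l$ with $\tau_l = \sqrt{\alpha \bar{\delta}^2_{l,n_l}\log t / n_l} + \alpha^2\log t / n_l$ (matching Eq.\eqref{eq:UCBv} up to the earlier reparametrization). So the selection rule gives $\hat{\mu}_k + \tau_k \geq \hat{\mu}_{k^*} + \tau_{k^*}$.

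First I would suppose (a) is false, i.e. $\hat{\mu}_{k^*} > \mu^* - \tau_{k^*}$, equivalently $\hat{\mu}_{k^*} + \tau_{k^*} > \mu^*$. Combined with the selection inequality this yields $\hat{\mu}_k + \tau_k > \mu^* = \mu_k + \bigtriangleup_k$. Next I would suppose (b) is false, i.e. $\hat{\mu}_k < \mu_k + \tau_k$, which gives $\mu_k + 2\tau_k > \mu_k + \bigtriangleup_k$, hence $\tau_k > \bigtriangleup_k / 2$. The final step is to show that $\tau_k > \bigtriangleup_k/2$ forces (c): I would plug the explicit form of $\tau_k$ in and solve the inequality $\sqrt{\alpha \bar{\delta}^2_{k,n_k}\log t / n_k} + \alpha^2\log t/n_k > \bigtriangleup_k/2$ for $n_k$, bounding $\log t \leq \log T$ and using $\bar{\delta}^2_{k,n_k}$ in terms of $\delta^2_k$ (up to a constant, consistent with how Lemma~\ref{lemma:Hoeffding_eq} is stated). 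Since each term on the left is decreasing in $n_k$, the inequality can hold only when $n_k$ is small, and a routine manipulation (bounding each of the two terms by $\bigtriangleup_k/4$ separately) shows $n_k < 8\left(\frac{\delta^2_k}{\bigtriangleup_k^2} + \frac{2}{\bigtriangleup_k}\right)\log T$, which is exactly (c). Thus if (a) and (b) both fail, (c) must hold, establishing the trichotomy.

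The main obstacle I anticipate is the last step: carefully choosing how to split $\bigtriangleup_k/2$ between the square-root term and the linear term so that the resulting threshold on $n_k$ matches the stated constant $8(\delta^2_k/\bigtriangleup_k^2 + 2/\bigtriangleup_k)$ exactly, rather than some looser constant. This requires being slightly careful about whether one bounds $\alpha^2\log t/n_k$ or $\alpha\log t / n_k$ (the paper's Eq.\eqref{eq:UCBv} and the reparametrized $\tau_k$ differ in the power of $\alpha$ on the linear term), and about the relation between the empirical variance $\bar{\delta}^2_{k,n_k}$ and the true variance $\delta^2_k$ — strictly this last point needs a concentration argument for the empirical variance, but since the lemma is a purely deterministic case-split on the realized quantities $\hat\mu$, $\tau$ (the randomness is handled afterward via Lemma~\ref{lemma:Hoeffding_eq} and a union bound), I would state it with $\bar\delta^2$ in the threshold and reconcile constants when assembling the regret bound. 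Everything else is elementary algebra on the two UCB inequalities.
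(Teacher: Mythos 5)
Your proposal is correct and follows essentially the same route as the paper: the paper assumes all three statements fail and chains $\hat{\mu}^* + \tau^* > \mu_k + \bigtriangleup_k \geq \mu_k + 2\tau_k > \hat{\mu}_k + \tau_k$ to contradict the selection rule, which is exactly the contrapositive of your derivation (selection plus $\neg$(a) plus $\neg$(b) forces $\tau_k > \bigtriangleup_k/2$, hence (c)). The constant-matching issue you flag in the last step is real but is glossed over in the paper's own proof as well, so it is not a gap specific to your argument.
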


\begin{proof}
Suppose these three statements (a), (b), (c) are false. Then,
\begin{align}
\hat{\mu}^* + \tau_l^* & > \mu^* = \mu_k + \bigtriangleup_k&(a)false\\
& \geq \mu_k + 2(\sqrt{\frac{ \alpha {\delta^2}_{k,t_k}*\log{T} }{t_k}} + \frac{\log{T}}{t_k})&(c)false\\
& \geq \mu_k + 2(\sqrt{\frac{ \alpha {\delta^2}_{k,t_k}*\log{t} }{t_k}} + \frac{\log{t}}{t_k}) &\log T \downarrow \\
& > \hat{\mu}_k + \sqrt{\frac{ \alpha {\delta^2}_{k,t_k}*\log{t} }{t_k}} + \frac{\log{t}}{t_k}&(b) false.
\end{align}
So we obtain:
\begin{align}
\hat{\mu}^* + \tau_l^* & > \hat{\mu}_k + \sqrt{\frac{ \alpha {\delta^2}_{k,t_k}*\log{t} }{t_k}} + \frac{\log{t}}{t_k}
\end{align},
which is contradicted to the assumption that FEAT selects the sub-optimal feature $k$ rather than the optimal feature $l^*$.
\end{proof}

In the latter proof, we denote the total number of discrete features in one input instance as $L$, instead of $K$, to keep consistent notations as in our submission. 

\begin{theorem}
\label{theorem:regert bound_appendix}
\textbf{Perturbing highly sensitive features helps shrink the regret bound of FEAT.}  Let $\bigtriangleup_l >0$ and ${\delta^2_{l}}>0$ be the expected regret and the expected variance of the rewards received by modifying each of 
the top-$L$ candidate features. 
The expected regret bound of FEAT after $T$ iterations can be given as:
\vspace{-0.1cm}
\begin{equation}\label{eq:regretbound_appendix} 
\small
    \EX[{R}_T] \; \leq  \; \sum_{l=1}^L [8(\frac{\delta^2_{l}}{\bigtriangleup_l} + 2)\log T + \frac{\alpha}{\alpha-2} \bigtriangleup_l], \vspace{-0.01cm}
\end{equation}    \vspace{-0.02cm}
where $\EX[{R}_T] \overset{\text{def}}{=} \sum_{l=1}^L \EX[T_l]  \bigtriangleup_l$. ${T_l}$ is the number of times that feature $l$ is selected after $T$ iterations of the UCB-guided search. 
\end{theorem}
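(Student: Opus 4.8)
The plan is to run the classical UCB-V regret argument of \cite{audibert2009exploration}, specialised to the finite pool of $L$ candidate features produced by the OMP step, and to assemble it from the two auxiliary results already stated: the Bernstein-type tail bound of Lemma~\ref{lemma:Hoeffding_eq} and the three-way event decomposition of Lemma~\ref{lemma:three conditions}. Since $\EX[R_T]=\sum_{l=1}^{L}\EX[T_l]\,\bigtriangleup_l$, it suffices to prove, for every sub-optimal feature $l$ (those with $\bigtriangleup_l>0$), the per-arm bound
\begin{equation}
\EX[T_l]\ \le\ 8\Big(\frac{\delta^2_l}{\bigtriangleup_l^{2}}+\frac{2}{\bigtriangleup_l}\Big)\log T+\frac{\alpha}{\alpha-2};
\end{equation}
multiplying by $\bigtriangleup_l$ and summing over $l=1,\dots,L$ then yields Eq.~\eqref{eq:regretbound_appendix}, the optimal features contributing nothing since their $\bigtriangleup_l=0$.

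First I would fix a sub-optimal $l$ and set the threshold $u=\big\lceil 8(\delta^2_l/\bigtriangleup_l^{2}+2/\bigtriangleup_l)\log T\big\rceil$, which is exactly the quantity appearing in condition (c) of Lemma~\ref{lemma:three conditions}. Writing $T_l=\sum_{t=1}^{T}\mathbf{1}\{I_t=l\}$ and separating the rounds in which $l$ has already been pulled at least $u$ times gives $T_l\le u+\sum_{t=1}^{T}\mathbf{1}\{I_t=l,\ t_l(t-1)\ge u\}$. On the event of each surviving summand condition (c) fails, so Lemma~\ref{lemma:three conditions} forces (a) or (b) to hold at round $t$, whence
\begin{equation}
\EX[T_l]\ \le\ u+\sum_{t=1}^{T}\Big(P\big(\hat\mu^{*}\le\mu^{*}-\tau_l^{*}\big)+P\big(\hat\mu_l\ge\mu_l+\tau_l\big)\Big).
\end{equation}

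Next I would bound the two probabilities inside the sum. Because the number of pulls of each arm is itself random, I would take a union bound over the possible values $s\in\{1,\dots,t\}$ of the pull count and invoke Lemma~\ref{lemma:Hoeffding_eq} with the confidence width calibrated so the tail probability is at most $t^{-\alpha}$; this calibration is precisely what produces the $\sqrt{\alpha\,\bar\delta^2_{l,t_l}\log t/t_l}+\log t/t_l$ form of the exploration bonus in Eq.~\eqref{eq:UCBv}. Each of (a) and (b) then contributes at most $t\cdot t^{-\alpha}=t^{1-\alpha}$ per round, so $\sum_{t\ge 1}\big(P(\text{a})+P(\text{b})\big)\le\sum_{t\ge1}t^{1-\alpha}\le\frac{\alpha}{\alpha-2}$ for $\alpha>2$ (comparing the series with $\int_{1}^{\infty}t^{1-\alpha}\,dt$). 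Substituting back, bounding $u\le 8(\delta^2_l/\bigtriangleup_l^{2}+2/\bigtriangleup_l)\log T+1$ and absorbing the stray constant into the convergent tail, yields the per-arm bound, and the theorem follows after the multiplication and summation noted above.

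I expect the real obstacle to be this last probabilistic step: Lemma~\ref{lemma:Hoeffding_eq} is stated conditionally on the pull count $t_k$, and the bonus in Eq.~\eqref{eq:UCBv} uses the \emph{empirical} variance $\bar\delta^2_{l,t_l}$ rather than the true $\delta^2_l$, so making the union-over-pull-counts argument rigorous requires either a peeling argument over geometric ranges of $t_l$ or a self-normalised deviation bound for the empirical variance, together with control of the gap between $\bar\delta^2_{l,t_l}$ and $\delta^2_l$. This is also where the OMP ranking pays off: as noted after Theorem~\ref{theorem:regert bound}, the selected sensitive features have small and stable reward variance, so $\bar\delta^2_{l,t_l}\approx\delta^2_l$ is small and the resulting bound is genuinely tight. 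The remaining work — choosing $u$, evaluating the geometric-type sums, and collecting constants — is routine.
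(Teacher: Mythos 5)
Your proposal is correct and follows essentially the same route as the paper's proof: the same decomposition $\EX[R_T]=\sum_l \EX[T_l]\bigtriangleup_l$, the same threshold $u=\lceil 8(\delta_l^2/\bigtriangleup_l^2+2/\bigtriangleup_l)\log T\rceil$, the same appeal to Lemma~\ref{lemma:three conditions} to reduce to events (a) and (b), and the same union-over-pull-counts application of Lemma~\ref{lemma:Hoeffding_eq} yielding the $\alpha/(\alpha-2)$ tail. The technical caveat you raise at the end (conditioning on the random pull count and the empirical-versus-true variance in the bonus) is real, but the paper's own proof passes over it in exactly the same informal way.
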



    

\begin{proof}
From the $\EX[{R}_T] \overset{\text{def}}{=} \sum_{l=1}^L \EX[T_l]  \bigtriangleup_l$, we know, in order to bound the expection of the regret we have to bound the $\EX[T_l]$ firstly. We divide the features into two parts, any optimal features $l^*$ and sub-optimal feature $l$. Let $u = \lceil 8(\frac{\delta^2_l}{\bigtriangleup^2_l} + \frac{2}{\bigtriangleup_l})\log T\rceil$. Then

\begin{align}
\EX[T_l]  &= \EX[\sum_{t = 1}^{T}\mathbf{1}(I_t =l)]\\
          &= \EX[\sum_{t = 1}^{T}\mathbf{1}(I_t =l,T_l < u) + \sum_{t = 1}^{T}\mathbf{1}(I_t =l,T_l \geq u)]\\
          &\leq u + \sum_{t = u+1}^{T}P(I_t =l,T_l \geq u)\\
          &\leq u + \sum_{t = u+1}^{T}(1-P(I_t =l,T_l < u))\\
          &\leq u + \sum_{t = u+1}^{T}[P(\hat{\mu}^* \leq \mu^* - \tau_l^*) + P(\hat{\mu}_l \geq \mu_l + \tau_l)].
\end{align}

From the Lemma \ref{lemma:Hoeffding_eq}, we get 
\begin{align}
P(\hat{\mu}_l + \tau_l > \mu^*) &= P(\hat{\mu}_l + \tau_l >  \mu_l + \bigtriangleup_l) \\
&\leq P(\hat{\mu}_l > \mu_l + \bigtriangleup_l/2)\\
&\leq 2\exp({-T_l{\bigtriangleup_l}^2/(8\delta^2 + 4\bigtriangleup_l/3)})
\end{align}

So the 
\begin{align}
\sum_{t = u+1}^{T}P(\hat{\mu}^* \leq \mu^* - \tau_l^*) &\leq \sum_{t = u+1}^{T}P(\prod_{T_l^* = 1}^{t-1}\hat{\mu}^* \leq \mu^* - \tau_l^*)\\
&\leq \sum_{t = u+1}^{T}\frac{1}{t^{\alpha-1}}\\
&\leq \sum_{t = 2}^{T}\frac{1}{t^{\alpha-1}}\\
&\leq \frac{1}{\alpha-2},
\end{align}
here we use the in-equation in Lemma \ref{lemma:Hoeffding_eq}. With the similar way, we obtain $P(\hat{\mu}_l \geq \mu_l + \tau_l) \leq \frac{1}{\alpha-2}$.

Therefore, we can bound the $\EX[T_l]$ to :
\begin{align}
\EX[T_l] &\leq 8(\frac{\delta^2_l}{\bigtriangleup^2_l} + \frac{2}{\bigtriangleup_l})\log T +1 + \frac{2}{\alpha-2} \\
&\leq 8(\frac{\delta^2_l}{\bigtriangleup^2_l} + \frac{2}{\bigtriangleup_l})\log T + \frac{\alpha}{\alpha-2}
\end{align}

The expected regret bound is:
\begin{align}
    \EX[{R}_T] &= \sum_{l=1}^L \EX[T_l]  \bigtriangleup_l\\
    &\leq \sum_{l=1}^L [8(\frac{\delta^2_l}{\bigtriangleup_l} + 2)\log T + \frac{\alpha}{\alpha-2} \bigtriangleup_l].\\
\end{align}
\end{proof} 




\section{D. Feature sensitivity analysis for different datasets}

The proposed OMP-boosted MAB search in \textit{FEAT} improves the efficiency of search by identifying the highly sensitive features first and narrowing down the exploration range biased towards these features after. In the following empirical study, we show how the feature sensitivity level varies across  features in different datasets. 

{We define the metric evaluating the feature sensitivity level (FS) as the change of the decision confidence of the target classifier if feature $i$ of the input instance is changed.
\begin{equation}\label{eq:sensitivity}
    FS_{i} = f_{y_{k}}(\hat{\mathbf{x}}_{-i}) - f_{y_{k}}(\mathbf{x})
\end{equation}
where $\hat{\mathbf{x}}_{-i}$ denotes the modified input instance with only the feature $i$ changed. $\mathbf{x}$ is the original input instance. The larger $FS_{i}$ is, the more sensitive the corresponding feature $i$ is in the classification task. We aim to identify such highly sensitive features. Compared to the less sensitive ones, conducting adversarial perturbations over these features is likely to cause more drastic variation to the target classifier's output.}
 
We present the feature sensitivity for evaluation datasets except Yelp-5 due to its large vocabulary size. The reported feature sensitivity of one feature is   the average change of the prediction level  in all instances.

Fig \ref{fig:EHR} and Fig \ref{fig:IPS-sensitivity} show that only a few features in EHR and IPS dataset have significantly high sensitivity level in the classification tasks. In EHR, the feature 31 has the average sensitivity value 0.24, whereas most of the others are lower than 0.04. In IPS, the feature 19 has the average sensitivity level larger than 0.5, which means attacking only the feature 19 can already manage to achieve the attack goal successfully. The sensitivity of the other features in IPS are less than 0.2, which are significantly less useful in the attack.  

However, Fig.\ref{fig:PEDec} shows that the feature sensitivity level of PEDec across 5,000 malware features has a  small variance (the feature sensitivity ranges from 0.435 to 0.502). It implies that   malware features may contribute equally to the adversarial attack.

\begin{figure}[htbp]
\centering
\begin{minipage}[t]{0.4\textwidth}
\includegraphics[width=0.9\textwidth]{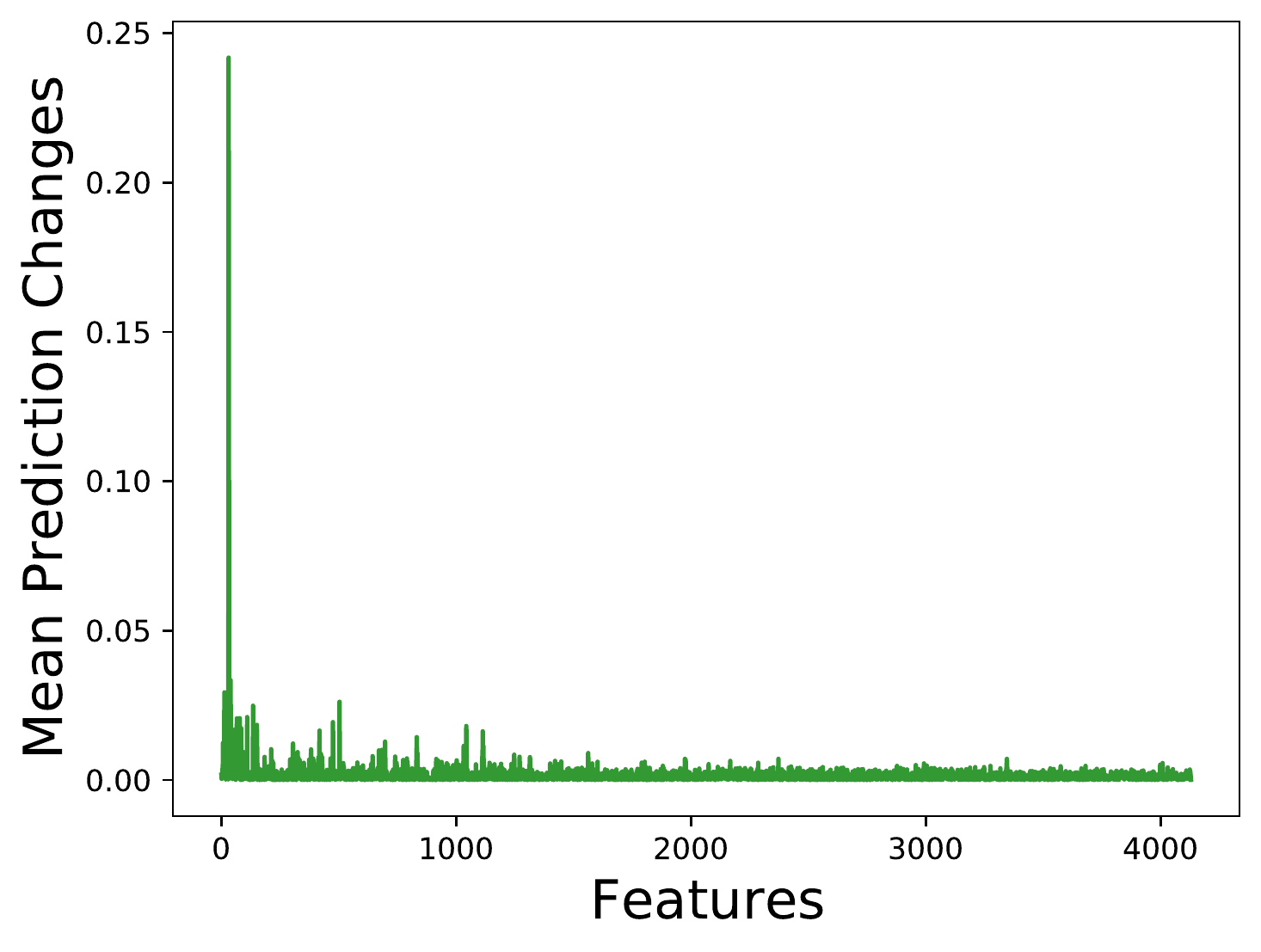}
\caption{Feature sensitivity of EHR dataset}
\vspace{+0.5cm}
\label{fig:EHR}
\end{minipage}
\vspace{+0.5cm}
\begin{minipage}[t]{0.4\textwidth}
\centering
\includegraphics[width=0.9\textwidth]{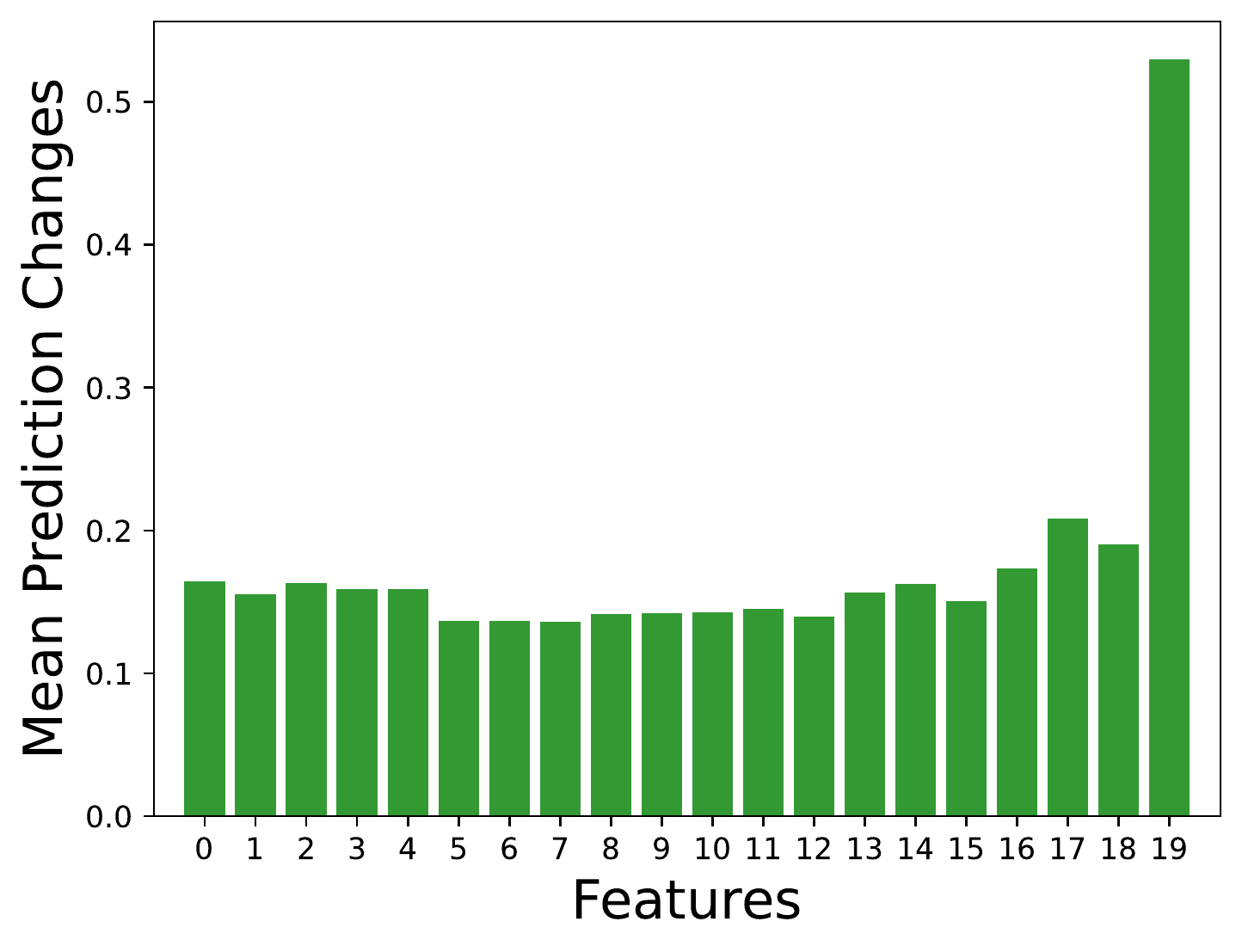}
\caption{Feature sensitivity of IPS dataset}
\label{fig:IPS-sensitivity}
\end{minipage}
\begin{minipage}[t]{0.4\textwidth}
\centering
\includegraphics[width=0.9\textwidth]{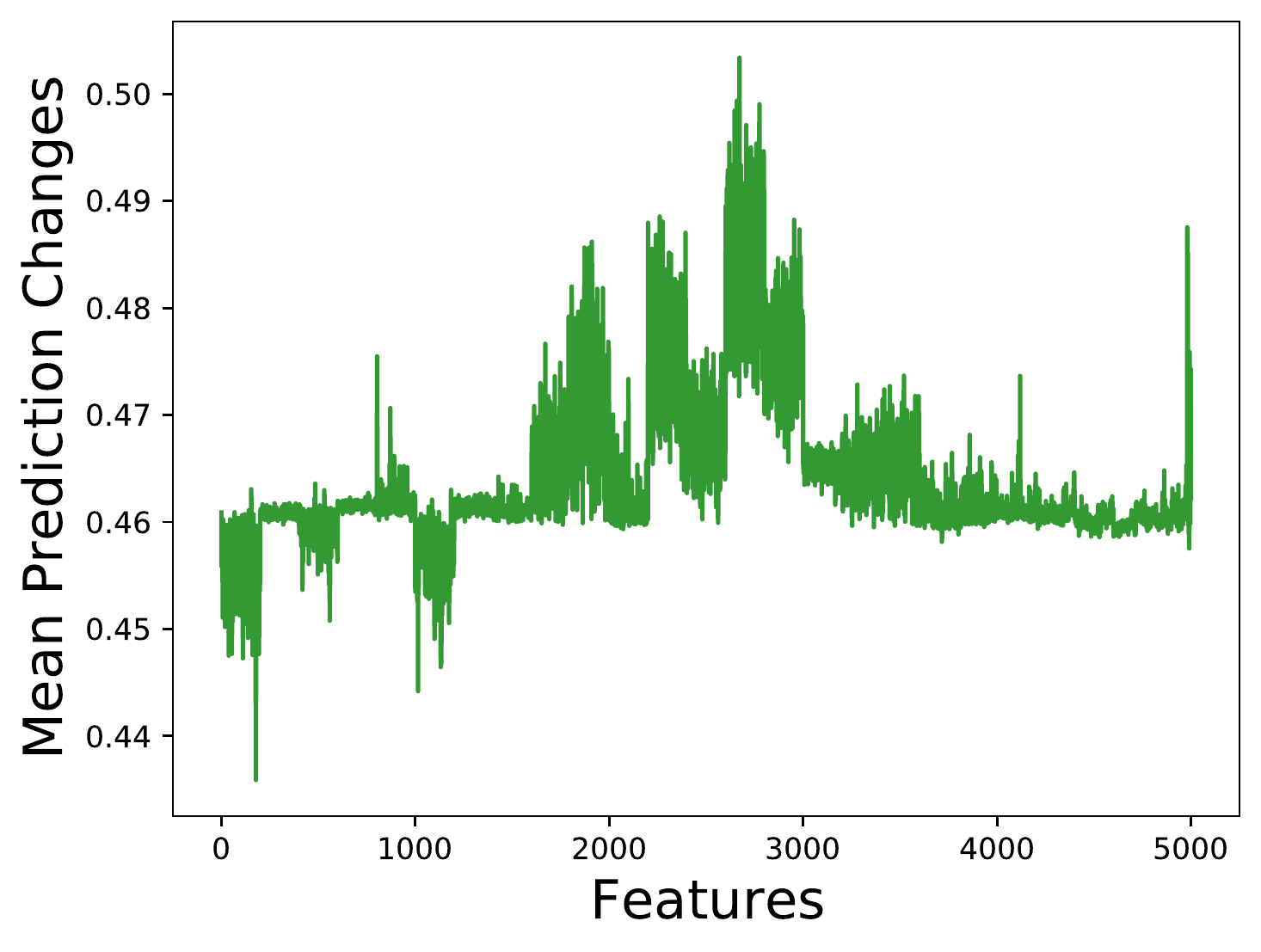}
\caption{Feature sensitivity of PEDec dataset}
\label{fig:PEDec}
\end{minipage}
\end{figure}


\section{E.Parameter sensitivity analysis}


We show below the variation of the attack performances using \textit{FEAT}, with different settings of $\alpha$ in Eq.\ref{eq:UCBv}. Adjusting the value of $\alpha$ balance exploration and exploitation in the search process. Larger $\alpha$ prefers exploration over the features that are not visited and modified before. On the contrary, smaller $\alpha$ biases the search process within the features that are considered to contribute high rewards.

\begin{table*}[]
\centering
\renewcommand{\arraystretch}{1.3}
\caption{Attack performances of FEAT over the four datesets with different choices of $\alpha$}
\label{tab: Exploration and Exploitation}
\resizebox{0.9\linewidth}{!}{
\begin{tabular}{cc|cccccccc}
\toprule
\multicolumn{2}{c|}{\textbf{Dateset}}                                     & \multicolumn{2}{c}{\textbf{Yelp-5}} & \multicolumn{2}{c}{\textbf{IPS}} & \multicolumn{2}{c}{\textbf{EHR}} & \multicolumn{2}{c}{\textbf{PEDec}} \\ \hline
\multicolumn{2}{c|}{\textbf{Budget}}                                      & \textbf{6}       & \textbf{8}       & \textbf{5}      & \textbf{6}     & \textbf{4}      & \textbf{6}     & \textbf{10}      & \textbf{12}     \\ \hline
\multicolumn{1}{c|}{\multirow3{*}{\textbf{$\alpha$ = 0}}} & \textbf{Runtime} (sec) & 0.11             & 0.24             & 0.56            & 0.74           & 0.86            & 0.38           & 8.98             & 9.36            \\ \cline{2-10} 
\multicolumn{1}{c|}{}                                       & \textbf{No.query} & 1293             & 2271             & 352             & 509            & 82.41           & 29.66          & 22618            & 23839           \\ \cline{2-10} 
\multicolumn{1}{c|}{}                                       & \textbf{SR} & 0.893            & 0.945            & 0.81            & 0.82           & 0.89            & 0.84           & 0.73             & 0.82            \\ \hline
\multicolumn{1}{c|}{\multirow{3}{*}{\textbf{$\alpha$ = 2}}} & \textbf{Runtime} (sec) & 0.103            & 0.144            & 0.29            & 0.28           & 0.75            & 0.33           & 2.39             & 2.41            \\ \cline{2-10} 
\multicolumn{1}{c|}{}                                       & \textbf{No.query} & 662              & 911              & 113             & 112            & 71.28           & 24.70          & 7080             & 6607            \\ \cline{2-10} 
\multicolumn{1}{c|}{}                                       & \textbf{SR} & 0.95             & 0.96             & 0.92            & 0.93           & 0.89            & 0.899          & 0.75             & 0.87            \\ \hline
\multicolumn{1}{c|}{\multirow{3}{*}{\textbf{$\alpha$ = 4}}} & \textbf{Runtime} (sec) & 0.101            & 0.143            & 0.28            & 0.28           & 0.74            & 0.32           & 4.10             & 4.13            \\ \cline{2-10} 
\multicolumn{1}{c|}{}                                       & \textbf{No.query} & 643              & 883              & 111             & 121            & 71.26           & 24.52          & 7080             & 6607            \\ \cline{2-10} 
\multicolumn{1}{c|}{}                                       & \textbf{SR} & 0.96             & 0.965            & 0.92            & 0.94           & 0.89            & 0.897          & 0.75             & 0.87            \\ \hline
\multicolumn{1}{c|}{\multirow{3}{*}{\textbf{$\alpha$ = 8}}} & \textbf{Runtime} & 0.1              & 0.12             & 0.25            & 0.28           & 0.73            & 0.34           & 3.51             & 3.12            \\ \cline{2-10} 
\multicolumn{1}{c|}{}                                       & \textbf{No.query} & 632              & 883              & 119             & 123            & 54.59           & 20.45          & 7080             & 6607            \\ \cline{2-10}
\multicolumn{1}{c|}{}                                       & \textbf{SR} & 0.954            & 0.965            & 0.93            & 0.92           & 0.932           & 0.935          & 0.75             & 0.87            \\ \bottomrule
\end{tabular}
}
\end{table*}

From Table.\ref{tab: Exploration and Exploitation}, we represent the performance metrics with $\alpha = 0, 2, 4, 8$ over four datasets. The \textbf{bold set} in each dataset have the rather better performance considering \textbf{Runtime}, \textbf{No.query}, \textbf{SR} together. PEDec (all the features sensitivity are almost the same level) need the smallest $\alpha$ to adjust the exploration, which means every feature you select in each iteration will contribute for the final reward and which dose not need deep exploration. Even you take the deep exploration $\alpha = 8$, the attack performance results are almost the same as the condition $\alpha = 2$.
However, EHR and IPS (only one or a few features have the high features sensitivity) are suitable with the larger $\alpha =$ 4 and 8. Because the only one or a few features can make the main contribution to the adversarial attack, we explore these features in the whole features in the searching process. Especially if the whole feature number is very big, such as EHR with 4,130 features, this exploration will be more difficult.

Globally, on the datasets of severely skewed feature sensitivity distributions (like EHR and IPS), we prefer a large $\alpha$ to encourage exploration in the multi-armed bandit search over all of the features, so as to extend the exploration range to cover the rarely appearing yet highly sensitive features to guarantee the success of attack. In contrast, on the datasets of the uniformly distributed feature sensitivity level (like PEDec), we 
consider to choose a small $\alpha$ to put more weights on exploitation over the features empirically inducing larger variations than the others over the classifier's decision confidence scores, in order to reach the attack goal within as few as possible iterations, i.e. improving the efficiency of the attack. 



\section{F.Explanation to the adopted MAB setting}

\textbf{First}, we show that adversarial attack over a categorical input $x_{i}$ is intrinsically a problem of set function maximization, following Eq.1 and Eq.4 on page 1 and 4 of \cite{hyhan2022iclr}. 

Given an input instance ${x} = \{x_1,x_2,x_3,...,x_N\}$ composed of $N$ categorical attributes, each $x_i$ can take any of optional ${M}$ categorical values. The classifier $f$ produces the soft decision scores $f_{y_k}$ ($k=1,2,3,...,{K}$) with respect to $K$ class labels. 
The goal of the attack is to find the minimal set of categorical feature perturbations $l= \texttt{diff}(x,\hat{{x}})$, with which the maximum gap between the decision score of any wrong label $k$ and the correct label ${K}$ is larger than 0. 
\begin{equation}\label{eq:optimal_attack}
\small
\begin{split}
&\hat{x}^{*} = \underset{\hat{{x}},\,\,l= \texttt{diff}({x},\hat{{x}})}{\argmax} \quad m_f  \\
&s.t. \quad m_f \geq{0},\,\;\;|l|\leq\varepsilon 
\end{split}
\end{equation}
where $m_f=\underset{k\in\{1,...,{K}-1\}}{\max}\{f_{y_k}(\hat{x})\} - f_{y_{K}}(\hat{x})$ is the gap of the decision scores over the perturbed input instance $\hat{x}$. $m_{f}\in[-1,0)$ and $m_{f}\geq{0}$ correspond to correct and wrong classification output respectively. $\texttt{diff}$ denotes the set of the perturbed categorical features in in the input instance $x$. $|\texttt{diff}|\leq{\epsilon}$ gives the budget limit of the attack. \textit{The number of the modified features in the attack should be no more than $\epsilon$}. Different from the attacks with continuous data, solving the set function optimization problem requires to conduct a combinatorial search in the categorical feature space. This is in nature NP-hard. In the high-dimensional case, performing exhaustive search to solve Eq.1 is prohibitively expensive and randomly selecting features to perturb can produce arbitrarily bad results. 

\noindent \textbf{Relation between the reward definition and the set function maximization problem.} In our study, the reward of modifying a categorical feature $x_l$ in the current iteration $t^c$ of the exploration process is defined as the maximum possible gap $m_{f}$ between any wrong label $k$ and the correct label $K$. We reorganize the formal definition of the reward value as in Eq.2. 
\begin{equation}\label{eq:reward_appendix}
\small
\begin{split}
    G_{l,t^c} &= \underset{k=1,2,3,...,K-1}{\max} f_{y_{k}}(\hat{\mathbf{x}}_{t^{c}}) - f_{y_{K}}(\hat{\mathbf{x}}_{t^{c}}) + \Lambda \\
    & \texttt{diff}(\hat{\mathbf{x}}_{t^{c}},\hat{\mathbf{x}}_{t^c-1}) = \{x_{l}\}\\
\end{split}
\end{equation}
where $\hat{\mathbf{x}}_{t^{c}}$ denotes the adversarially perturbed input instance at the current iteration $t^{c}$. $\texttt{diff}(\hat{\mathbf{x}}_{t^{c}},\hat{\mathbf{x}}_{t^c-1}) = \{x_{l}\}$  denotes that only one more feature $x_l$ is changed at the current iteration $t^c$, compared to the previous iteration $t^c-1$. We add a large enough constant $\Lambda$, e.g. $\Lambda=1$ in practices, to ensure the non-negativeness of the received rewards. 

The difference between the maximum reward derived at the two successive iterations $t^{c}$ and $t^{c}-1$ equals to the marginal gain of the set function maximization problem in Eq.1 by incrementally modifying one more feature.  
\begin{equation}
\small
\begin{split}
    &G^{*}_{t^c} - G^{*}_{t^c-1} = m_{f}(\hat{x}^{*}_{t^c}) - m_{f}(\hat{x}^{*}_{t^c-1}) \\
    &G^{*}_{t^c} = \underset{x_{l}\in S^{t^c}}{max}\,\,G_{l,t^c}\\
    &G^{*}_{t^c-1} = \underset{x_{l}\in S^{t^c-1}}{max}\,\,G_{l,t^c-1}
\end{split}
\end{equation}
where $S^{t^c}$ and $S^{t^c-1}$ are the sets of the candidate features at the iteration $t^c$ and $t^c-1$. $m_{f}(\hat{x}^{*}_{t^c})$ and $m_{f}(\hat{x}^{*}_{t^c})$ are the maximum $m_{f}$ value reached at the current iteration $t^{c}$ and its precedent iteration $t^c-1$. The difference between two maximum $m_{f}$ value defines the marginal gain of the maximization objective in Eq.1 by incrementally perturb one more feature of the input instance $x$. 

\noindent \textbf{Observation.1} The difference between the maximum rewards achieved at two successive iterations (in Eq.3) does not necessarily diminish for a general classifier $f$. 

As a set function optimization problem, \textit{the marginal gain diminishes only if the set function-based objective follows the strict submodularity property} \cite{elenberg2018restricted}, which further requires all the parameters of the target classifier to be positive \cite{QiSysML2018,elenberg2018restricted}. However, the positiveness constraint over the classifier's parameters is not realistic and restricts severely the flexibility of the classifier. For a general classifier $f$, the submodularity of the corresponding attack objective (in Eq.1 of the revised submission) thus does not hold. As a result, the difference between the rewards received at successive iterations does not necessarily converge to zero along the search process. On the contrary, the reward difference given in Eq.45 can vary in a non-monotonically way. 

This observation further implies the non-diminishing property of the reward value in the attack scenario against a general classifier. The reward difference given in Eq.45 can be considered as a relative effectiveness measurement of the perturbation efforts introduced at each iteration, compared to the last iteration. The larger the reward difference is, the more effective perturbation the current search iteration introduces compared to the previous iteration. In this sense, the non-monotonic variation of the reward difference indicates that the perturbation effects injected by modifying one more candidate feature does not diminish monotonically as the attack objective increases.

\noindent \textbf{Observation.2} In the long run, we acknowledge that the reward distribution of changing one feature $x_{i}$ in the input is not stationary in general along the combinatorial search process. However, within the successive few iterations, the reward of modifying the top sensitive features does not commit significant drift. The rewards received by modifying these features remain stable within a few iterations of the combinatorial exploration. 

For a classifier $f$ and a data set of testing input instances, we conduct one-factor-at-a-time sensitivity analysis \cite{Campbell2008science} over the categorical features. Given an input instance $x$, we change each feature $x_{i}$ while keeping all the others fixed. The averaged change of the probabilistic classification output over all the input instances in the data set is used as the feature-wise sensitivity measurement. A larger average change indicates that the classifier’s output is more sensitive to the change over the corresponding categorical feature. 

Given the current iteration $t^c$, we measure the reward values received by modifying the top 10 sensitive features within the consecutive iterations $\{t^c+T\}$ ($T=0,1,2,3,4,...,6$). We then compute the ratio between the standard deviation and the average of the reward values by modifying each of the top 10 sensitive features $x_{l}$ within the consecutive iterations, noted as $\texttt{var}(\{G_{l,t^c+T}\})/\texttt{avg}(\{G_{l,t^c+T}\})$. The smaller the ratio $\texttt{var}(\{G_{l,t^c+T}\})/\texttt{avg}(\{G_{l,t^c+T}\})$ is, the more stable the reward of the corresponding feature $x_{l}$ stays across the consecutive iterations. We observe that all of the ratio values of the top 10 sensitive features remains lower than $1e^{-2}$ on the 4 datasets. In Figure.4, we take PEDec and EHR as the examples and show the value of $\texttt{var}(\{G_{l,t^c+T}\})/\texttt{avg}(\{G_{l,t^c+T}\})$ for the top-10 sensitive features on PEDec within the consecutive 6 UCB-based search iterations. 

\begin{figure}[t]\label{fig:RewardDistribution}
\centering
\begin{minipage}[t]{0.4\linewidth}
\centering
\includegraphics[width=\textwidth]{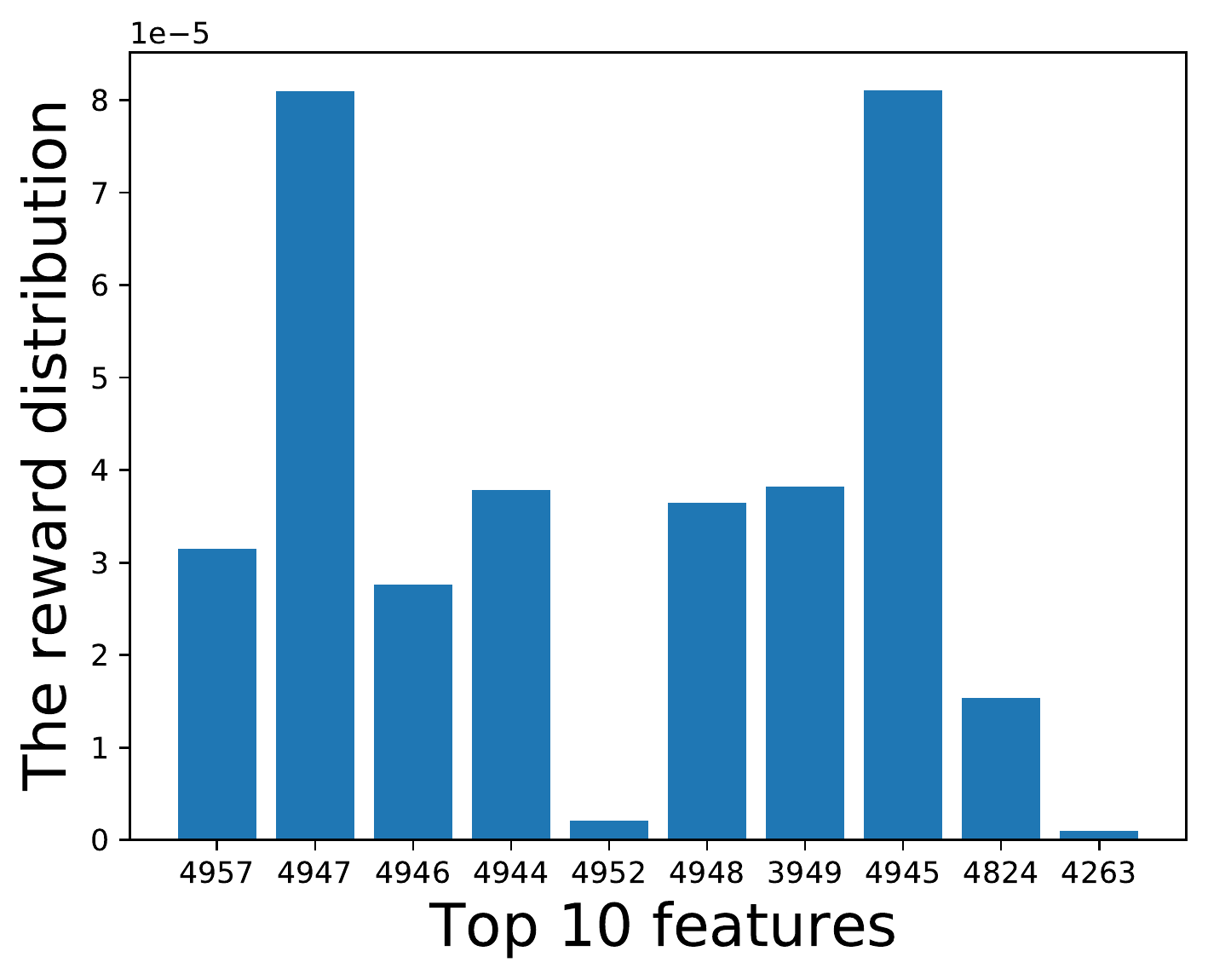}\\
{\scriptsize (a) PEDec}
\end{minipage}%
\hspace{10mm}
\begin{minipage}[t]{0.42\linewidth}
\centering
\includegraphics[width=\textwidth]{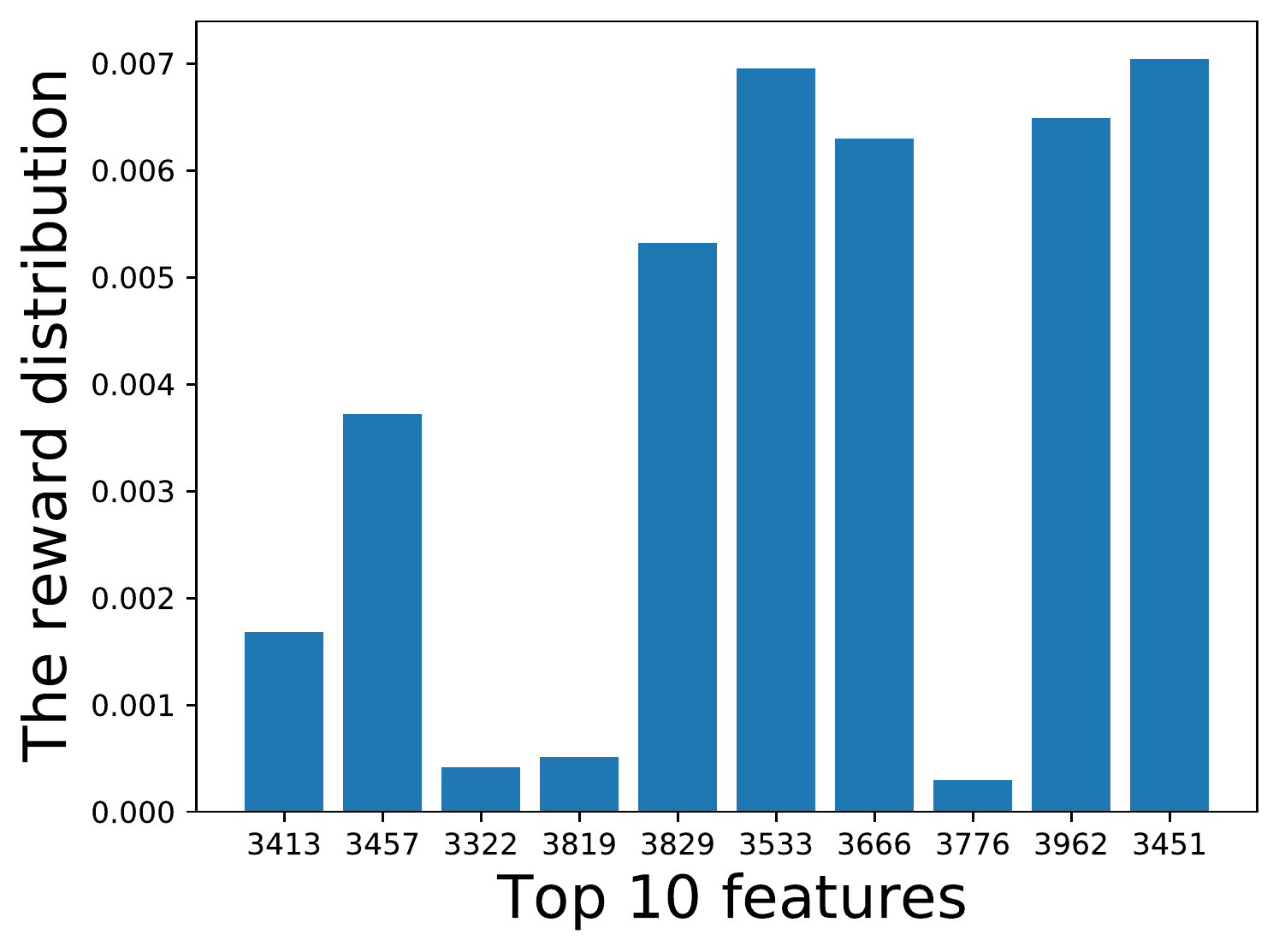}\\
{\scriptsize (b) EHR}
\end{minipage}%
\caption{The reward distribution of the top 10 sensitive features on PEDec and EHR.}
\end{figure}


The observations above denote that within the neighboring search iterations $\{t^c+T\}$. The variance of the rewards by perturbing those highly sensitive features is small. The potential explanation behind the observation can then be summarised as follows: 

\noindent According to Theorem.1 in \cite{hyhan2022iclr} (Eq.2 and 3 on Page 3 of \cite{hyhan2022iclr}), the existence of the sensitive features is one of the key origins of adversarial vulnerability of the target classifier with categorical inputs. Higher feature sensitivity level causes significantly higher instability of the classifier's decision output, if the sensitive features are perturbed. Therefore, even though the reward distribution of modifying one feature dimension may drift as more and more features are perturbed during the attack process, the impact of modifying these highly sensitive features over misleading the classifier's decision remains relatively more stable than injecting modifications over other less sensitive features. Additionally, constraining the UCB-based search within the successive few iterations help further reduce the potential variation of the reward distribution in the search process.

Based on the observations, we adopt the following two-fold design in the proposed FEAT method to handle the non-stationary reward distribution during the search process: 

\noindent \textbf{First,} we make use of the orthogonal matching pursuit (OMP) step in FEAT to select the features that can potentially bring the most variations to the classifier's decision output.  On IPS data set (1103 attributes), the 5 mostly selected features by the OMP step also appear as the top 20 sensitive features. On EHR data set (4130 attributes), the 7 most selected features by the OMP step show up in the top 20 sensitive features. The overlapping between the features useful for attack and the top ranked sensitive attributes indicates that using the OMP-guided exploration helps the attacker narrow down the combinatorial search range to focus on the most sensitive features. 

\noindent \textbf{Second,} we apply the OMP step to compute the relaxed gradient magnitudes with respect to the candidate features after every $\tau$ UCB search iterations. We then re-initialise the UCB search over the updated top $L$ features that are re-selected by the OMP operation. In this sense, we adopt the UCB-based search in the inner iterations (see Line 6 to Line 16 of Algorithm.1 in the submission) and conduct the OMP operation in the outer iterations (see Line 3 to Line 5 of Algorithm.1 in the submission). In our experimental study, $\tau =5$ and $\tau =6$ both achieve the best empirical performances. 

\textbf{On one hand}, we use the OMP operation in the outer iterations to restrict down the search range to the potentially sensitive features. The reward values with respect to those sensitive features tend to remain stable over different search rounds. \textbf{On another hand}, we reinitialize and relaunch the UCB-based search after the top $L$ features are updated in the outer iteration. Via this way, the proposed FEAT method constrains the UCB-based search within the consecutive $\tau$ inner iterations. Based on our observations, the reward distribution of the selected $L$ sensitive features can be considered empirically approximately stationary within the $\tau$ inner iterations. 

fgithub
\section{G.Accuracy of the targeted classifier over adversary-free testing data}


The testing accuracy with the targeted classifiers for different datesets is summarized in Table \ref{tab:classifier}. 

\begin{table}[t]
\centering
\caption{This is the trained model of different dateset}
\label{tab:classifier}
\scriptsize
\begin{tabular}{|c|c|c|c|c|}
\hline
\textbf{Dataset} & \textbf{Classifier} & \textbf{Accuracy} & \textbf{F1} & \textbf{AUC} \\ \hline
\textbf{Yelp-5}  & LSTM                & 0.61              & 0.60            &0.67              \\ \hline
\textbf{PEDec}   & CNN                 & 0.94              & 0.92            & 0.90             \\ \hline
\textbf{IPS}     & LSTM                & 0.92              & 0.943       & 0.987        \\ \hline
\textbf{EHR}     & LSTM                & 0.932             & 0.883       & 0.909        \\ \hline
\end{tabular}
\end{table}

In parallel to Table.\ref{tab:Yelp-5 performance} in the submission, we provide the attack performance comparison of all the involved attack algorithms with the high attack budget in Table.\ref{tab:Yelp-5 performance V2}. Compared to Table.\ref{tab:Yelp-5 performance}, a higher attack budget level allows the attack methods to introduce more modifications to the input test instances. Hence the attack success rate and the averaged number of changed features increase correspondingly for most of the attack methods in Table.\ref{tab:Yelp-5 performance V2}. However, we can find that the proposed FEAT method can consistently obtain higher \textbf{SR} and lower/similar \textbf{No.change} than the other attack baselines on Yelp-5, IPS and EHR data, and ranks the second on PEDec. Simultaneously, FEAT delivers successful attacks by costing only a small fraction of \textbf{Runtime} and \textbf{No.query}. The results confirm in a further step the merits of balancing exploration and exploitation in the attack over high-dimensional categorical data.

\begin{table*}[t]
\renewcommand{\arraystretch}{1.1}
\caption{The performances of the attack methods evaluated on \textbf{efficiency} metrics with higher attack budget than those given in Table.\ref{tab:Yelp-5 performance}: \textbf{Runtime} (average running time in seconds) and \textbf{No.query} (average no. of $f(\hat{\textbf{x}})$ evaluation), and \textbf{effectiveness} metrics:  \textbf{SR} (success rate) on four datasets. The attack time limit  $T_L$=1000 sec for these four datasets.
}
\vspace{+0.2cm}
\label{tab:Yelp-5 performance V2}
\scriptsize
\begin{subtable}[The results on Yelp-5 data]{
\resizebox{0.495\linewidth}{!}{
\begin{tabular}{lc|ccc}
\toprule
\multicolumn{2}{l|}{\textbf{Yelp-5}}                                                                                       & \multicolumn{3}{c}{\textbf{Budget = 8}}                              \\ \hline
\multicolumn{2}{l|}{\textbf{Attack Type \& Algo.}}                                                                                 & \textbf{Runtime} (sec) $\downarrow$   & \textbf{No.query} $\downarrow$    & \textbf{SR} $\uparrow$  \\ \hline
\multicolumn{1}{l|}{\multirow{2}{*}{\textbf{\begin{tabular}[c]{@{}l@{}}Domain\\ Specific\end{tabular}}}} & TextBugger       & 1.43          & 28                    & 0.75                 \\ \cline{2-5} 
\multicolumn{1}{l|}{}                                                                                    & TextFooler       & 0.19          & 155                  & 0.86                  \\ \hline
\multicolumn{1}{l|}{\multirow{2}{*}{\textbf{\begin{tabular}[c]{@{}l@{}}Black\\ Box\end{tabular}}}}       & FSGS            & 1.62          & 84000                  & 0.95                  \\ \cline{2-5} 
\multicolumn{1}{l|}{}                                                                                    & {FEAT-B} & {0.13} & {2000}  & {0.92}  \\ \hline
\multicolumn{1}{l|}{\multirow{3}{*}{\textbf{\begin{tabular}[c]{@{}l@{}}White\\ Box\end{tabular}}}}       & GradAttack      & 0.46          & 129514                 & 0.85                 \\ \cline{2-5} 
\multicolumn{1}{l|}{}                                                                                    & OMPGS           & 0.82          & 7000                   & 0.96                 \\ \cline{2-5} 
\multicolumn{1}{l|}{}                                                                                    & \textbf{FEAT}   & \textbf{0.12} & \textbf{883}   & \textbf{0.97}   \\ \bottomrule
\end{tabular}
\vspace{-1cm}
}
}\end{subtable}
\qquad
\begin{subtable}[The results on IPS data]{
\resizebox{0.495\linewidth}{!}{
\begin{tabular}{lc|ccc}
\toprule
\multicolumn{2}{l|}{\textbf{IPS}}                                                                                    & \multicolumn{3}{c}{\textbf{Budget = 6}}                                              \\ \hline
\multicolumn{2}{l|}{\textbf{Attack Type \& Algo.}}                                                                  & \textbf{Runtime} (sec) $\downarrow$   & \textbf{No.query} $\downarrow$   & \textbf{SR} $\uparrow$    \\ \hline
\multicolumn{1}{l|}{\multirow{2}{*}{\textbf{\begin{tabular}[c]{@{}l@{}}Black\\ Box\end{tabular}}}} & FSGS            & 56.2            & 70000               & 0.85                 \\ \cline{2-5} 
\multicolumn{1}{l|}{}                                                                              & {FEAT-B} & {19.5} & {2800}  & {0.75}  \\ \hline
\multicolumn{1}{l|}{\multirow{3}{*}{\textbf{\begin{tabular}[c]{@{}l@{}}White\\ Box\end{tabular}}}} & GradAttack      & 23.7          & 2500          & 0.75                 \\ \cline{2-5} 
\multicolumn{1}{l|}{}                                                                              & OMPGS           & 0.99           & 92.1          & 0.83                 \\ \cline{2-5} 
\multicolumn{1}{l|}{}                                                                              & \textbf{FEAT}   & \textbf{0.28}  & \textbf{121} & \textbf{0.94}  \\ \bottomrule
\end{tabular}
}
\vspace{-1cm}
}\end{subtable}
\qquad

\end{table*}

\end{document}